\def\eqref#1{equation~\ref{#1}}
\def\1{\bm{1}}
\DeclareMathAlphabet{\mathsfit}{\encodingdefault}{\sfdefault}{m}{sl}
\SetMathAlphabet{\mathsfit}{bold}{\encodingdefault}{\sfdefault}{bx}{n}
\title{Partially Equivariant Reinforcement Learning in Symmetry-Breaking Environments}
\author{
\textbf{Junwoo Chang}$^{1}$,
\textbf{Minwoo Park}$^{2}$,
\textbf{Joohwan Seo}$^{3}$,
\textbf{Roberto Horowitz}$^{3}$, 
\textbf{Jongmin Lee}$^{2}$\thanks{Equal advising},\\
\textbf{Jongeun Choi}$^{1,2}$\footnotemark[1]
\\
$^{1}$School of Mechanical Engineering, Yonsei University, Seoul, South Korea \\
$^{2}$Department of Artificial Intelligence, Yonsei University, Seoul, South Korea \\
$^{3}$Department of Mechanical Engineering, University of California, Berkeley, United States \\
\texttt{\{junwoochang, minwoopark, jongminlee, jongeunchoi\}@yonsei.ac.kr}\\
\texttt{\{joohwan\_seo, horowitz\}@berkeley.edu}
}
\newtheorem{definition}{Definition}
\newtheorem{proposition}{Proposition}
\newtheorem{theorem}{Theorem}
\newtheorem{lemma}{Lemma}
\newtheorem{corollary}{Corollary}
\newtheorem{remark}{Remark}
\newtheorem*{lemmarestate}{Lemma 1}
\newtheorem*{theoremrestate}{Theorem 1}
\newtheorem*{corollaryrestate}{Corollary 1}
\newtheorem*{proprestate}{Proposition 1}
\newcommand{\eqnref}[1]{(\ref{#1})}
\begin{document}

\maketitle
 
\begin{abstract}
Group symmetries provide a powerful inductive bias for reinforcement learning (RL), enabling efficient generalization across symmetric states and actions via group-invariant Markov Decision Processes (MDPs). However, real-world environments almost never realize fully group-invariant MDPs; dynamics, actuation limits, and reward design usually break symmetries, often only locally. Under group-invariant Bellman backups for such cases, local symmetry-breaking introduces errors that propagate across the entire state--action space, resulting in global value estimation errors.
To address this, we introduce Partially Group-Invariant MDP (PI-MDP), which selectively applies group-invariant or standard Bellman backups depending on where symmetry holds. This framework mitigates error propagation from locally broken symmetries while maintaining the benefits of equivariance, thereby enhancing sample efficiency and generalizability.
Building on this framework, we present practical RL algorithms -- Partially Equivariant (PE)-DQN for discrete control and PE-SAC for continuous control -- that combine the benefits of equivariance with robustness to symmetry-breaking.
Experiments across Grid-World, locomotion, and manipulation benchmarks demonstrate that PE-DQN and PE-SAC significantly outperform baselines, highlighting the importance of selective symmetry exploitation for robust and sample-efficient RL. Project page: \url{https://pranaboy72.github.io/perl_page/}

\end{abstract}

\section{Introduction}
Group symmetries provide a powerful inductive bias in machine learning, enabling models to generalize efficiently. In robotics and continuous control, leveraging equivariance has been shown to improve data efficiency in both behavior cloning \citep{zeng2021transporter, ryu2022equivariant, ryu2024diffusion, wang2024equivariant, tie2024seed, huang2024fourier, zhao2025hierarchical, seo2023contact, seo2023geometric, seo2025equicontact, seo2025se}, where the data collection is costly, and reinforcement learning (RL) \citep{van2020mdp, kohler2024symmetric, wang2022robot, wang2022mathrm, tangri2024equivariant, nguyen2023equivariant, finzi2021residual, park2024approximate}, where exploration can be inefficient. Most existing equivariant RL methods are grounded in the notion of a group-invariant Markov Decision Process (MDP) \citep{wang2022mathrm, wang2022equivariant}, where invariance of the reward and transition functions implies symmetry in the optimal policy.

In practice, however, these symmetry assumptions rarely hold exactly. Real-world environments introduce \emph{symmetry-breaking factors} such as dynamics, actuation limits, or reward shaping. Under the Bellman backups based on the group-invariant MDP, even local violations of symmetry can introduce errors that propagate across the state--action space, leading to degraded value estimates, suboptimal policies, or even training failure. Prior works on approximate equivariance \citep{finzi2021residual, park2024approximate} attempt to mitigate this challenge by relaxing equivariance globally, e.g., by modifying architectures to tolerate violations. While effective to some extent, these methods often lose the sample efficiency benefits of strict equivariance and can become unstable when symmetry-breaking is extensive, since equivariance is still applied indiscriminately across the entire space.

To overcome this limitation, we introduce the framework of the \textbf{Partially Group-Invariant MDP (PI-MDP)}, which selectively applies the group-invariant structure only in regions where symmetry is preserved (Fig.~\ref{overview}). Our approach builds on the derivation that local symmetry-breaking leads to one-step backup errors that propagate globally. By routing updates to standard Bellman updates under the true MDP, we limit the propagation of one-step backup errors across the space. In particular, we detect symmetry-breaking regions via predictor disagreement outliers between an equivariant and an unconstrained one-step predictor, and apply standard rather than equivariant updates on those outliers while retaining equivariance elsewhere. Building on this framework, we develop practical reinforcement learning algorithms for both discrete and continuous control that retain the benefits of equivariance in symmetric regions while remaining robust to substantial symmetry-breaking. 
% The conceptual overview of our approach is depicted in Fig.~\ref{overview}.

The contributions of our work are summarized as follows: 1) We analyze how local symmetry violations induce global value error via one-step backups, clarifying when selective symmetry is beneficial. 2) We introduce the Partially Group-Invariant MDP (PI-MDP) and a practical RL formulation that uses equivariance where symmetry holds and falls back to standard updates where it breaks. 3) Across state-based discrete and continuous control experiments, we show that our method retains the sample efficiency gains of equivariance in symmetric regions and remains robust as symmetry-breaking increases, outperforming strict and approximate-equivariant baselines.

\begin{figure}[t]
\centering
\includegraphics[width=0.8\linewidth]{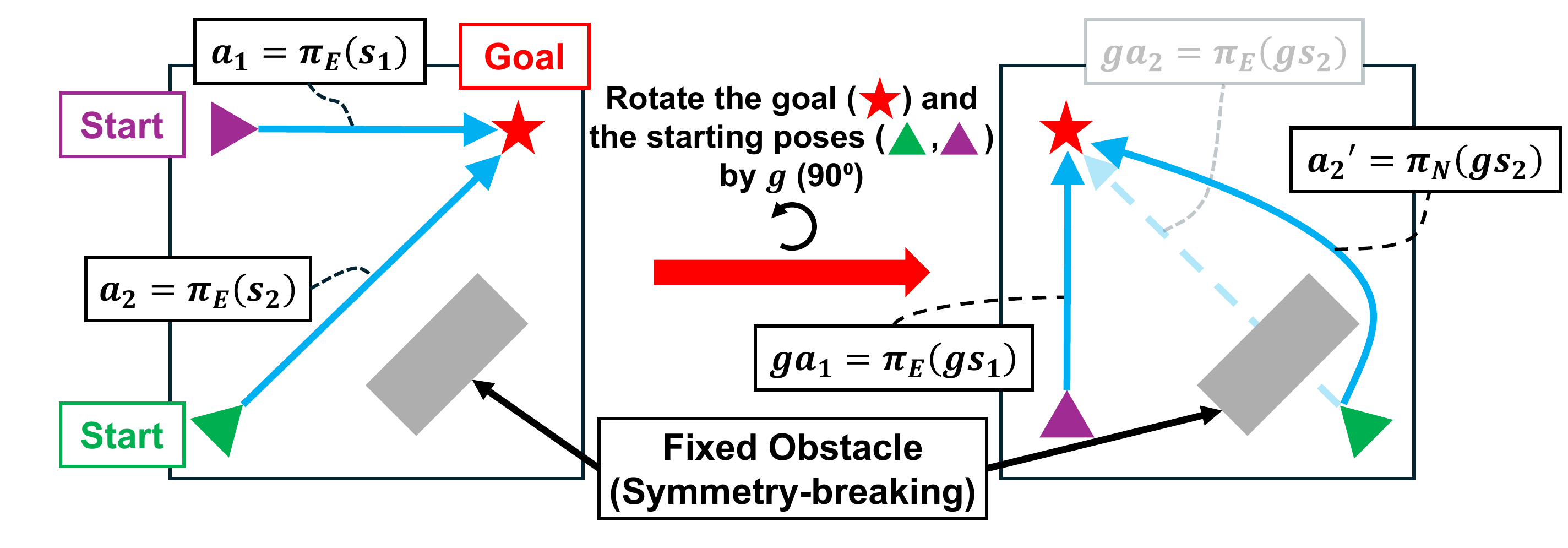}
% \caption{\textbf{Overview of partial equivariance in reinforcement learning.} 
% Equivariant networks provide strong inductive bias and sample efficiency in environments with exact symmetry. \textbf{Left:} In the symmetric case, the equivariant policy $\pi_E$ exploits this structure and learns an optimal action $a$ to reach the goal. \textbf{Right:} When the agent and goal are rotated by $90^\circ$ but a fixed obstacle (not represented in the agent's state) remains in place, the symmetry of the true dynamics is broken.
% A strictly equivariant policy is forced to output the rotated action $ga$, which is invalid in the presence of an obstacle.
% % , thereby corrupting training. 
% A gating function $\lambda$ detects such symmetry-breaking and activates the non-equivariant policy $\pi_N$, preserving robustness while retaining the sample efficiency benefits of equivariance in symmetric regions.}
\caption{\textbf{Overview of Partially Equivariant RL.} Assume a simple goal-reaching task where the state space includes only agent and goal poses. When states are rotated by $90^\circ$, the purple agent's equivariant action $ga_1$ remains valid. Conversely, the green agent's rotated action $ga_2$ collides with the obstacle, illustrating local symmetry-breaking in the true dynamics and restricting the use of strict rotational symmetry. To address this, our framework learns a gating function $\lambda$ that detects such violations and activates an unconstrained policy $\pi_N$ for $a'_2$, retaining the sample efficiency of equivariance while falling back to $\pi_N$ only where group symmetry is invalid.}
\label{overview}
\end{figure}

\section{Related Work}
\paragraph{Group equivariance in continuous control.}
Recent works have applied group equivariance to imitation learning and classical control \citep{zeng2021transporter, ryu2022equivariant, ryu2024diffusion, wang2024equivariant, tie2024seed, huang2024fourier, zhao2025hierarchical, seo2023contact, seo2023geometric, seo2025equicontact}, demonstrating high data efficiency and generalization over baseline models. Parallel efforts have investigated group equivariance in reinforcement learning (RL) \citep{van2020mdp, kohler2024symmetric,wang2022robot,wang2022mathrm, tangri2024equivariant, nguyen2023equivariant, chang2026group}, showing improved sample efficiency compared to the conventional RL approaches. However, the effectiveness of equivariant RL remains limited in more general settings, such as robotic control tasks, where inherent symmetry-breaking often arises from factors including occlusions, environmental asymmetries, kinematic singularities, and complex dynamics.

\paragraph{Approximate equivariance.}
Recent studies have proposed relaxing strict group equivariance to handle symmetry-breaking in data \citep{finzi2021residual, park2024approximate, wang2022approximately, romero2022learning, van2022relaxing, hofgard2024relaxed}. Such approaches introduce approximate equivariance, enabling models to remain robust when exact symmetries do not hold. In reinforcement learning, approximate equivariant architectures have also shown improved robustness and efficiency against symmetry-breaking \citep{finzi2021residual, park2024approximate}. 
% For instance, \citet{finzi2021residual} introduced residual pathways to equivariant linear layers, while \citet{park2024approximate} utilized relaxed equivariant layers with expanded kernel parameterizations \citep{wang2022approximately}.
% \citet{wang2022approximately} proposed a relaxed equivariant convolutional layer with expanded kernel parameterizations, which were later adopted in the RL setting by \citet{park2024approximate}. 
However, these methods primarily focus on global relaxations of equivariance at the representation level. In contrast, our approach addresses symmetry-breaking by minimizing local equivariance errors during the Bellman backup, thereby preventing their global propagation through value updates.

\section{Preliminaries}
\paragraph{Reinforcement learning.}
We consider a Markov decision process (MDP) defined as $\mathcal M=(\mathcal S, \mathcal A, P,R, \gamma)$ where $\mathcal S$ is the state space, $\mathcal A$ is the action space, $R: \mathcal S\times \mathcal A\to \mathbb R$ is the reward function, $P(\cdot\mid s,a)$ is the transition kernel, and $\gamma\in (0,1)$ is the discount factor. 
The agent learns a policy $\pi$ to maximize the expected return, $J = \mathbb E_{\pi,P}\Big[\sum_{t=0}^\infty \gamma^t r_t \,\Big|\, s_0=s, a_0=a \Big]$. 
The Bellman operator under a policy $\pi$ is 
$(\mathcal T^\pi Q)(s,a) = R(s,a) + \gamma \, \mathbb E_{s'\sim P(\cdot\mid s,a)}\Big[\, \mathbb E_{a'\sim \pi(\cdot\mid s')} [Q(s',a')] \Big]$,
while the optimal (hard) Bellman operator is
$(\mathcal T Q)(s,a) = R(s,a) + \gamma \, \mathbb E_{s'\sim P(\cdot\mid s,a)} \big[ \max_{a'} Q(s',a') \big]$.

\paragraph{Group equivariance.} A \textit{symmetry} is a transformation that preserves certain properties of a system, and the set of all symmetries forms a \textbf{group} \citep{bronstein2021geometric}. 
% The set of all symmetries forms a \textbf{group}, which satisfies associativity, identity, inverses, and closure.
A \textbf{group representation} is a homomorphism $\rho: G\to GL(n)$ that maps each group element $g\in G$ to an invertible $n\times n$ matrix. 
% A function $f:X\to Y$ is \textbf{equivariant} if $\rho_Y(g) f(x) \;=\; f(\rho_X(g) x), \quad \forall g \in G, x \in X$, where $\rho_X$ and $\rho_Y$ are the group representations acting on $X$ and $Y$ respectively.
% If instead $f(x)=f(\rho_X(g)x)$, the function is called \textbf{group-invariant}.
% With a slight abuse of notation, we will often write $g$ directly for its action on the relevant space (state, action, or next state).
A function $f:X\to Y$ is \textbf{equivariant} if $\rho_Y(g) f(x) \;=\; f(\rho_X(g) x)$, and \textbf{group-invariant} if $f(x)=f(\rho_X(g)x), ~ \forall g \in G, x \in X$.

\paragraph{Group-invariant MDP.} \label{gmdp} 
A group-invariant MDP \citep{wang2022mathrm, wang2022equivariant} is an abstract MDP based on MDP homomorphisms \citep{ravindran2001symmetries, ravindran2004approximate}, denoted as $\mathcal{M}_G=(\mathcal S,\mathcal A,P,R,\gamma)$. The optimal policy and optimal $Q$-function of the original MDP are recoverable from the abstract MDP provided the reward and transition kernel are group-invariant:
\[
R(s,a)=R(gs,ga), \quad P(s'\mid s,a)=P(gs'\mid gs,ga), \quad \forall g\in G.
\]

\section{Symmetry-Breaking in Group-Invariant MDPs}
\label{sec:symmetry-breaking}
Most equivariant RL approaches assume the existence of a group-invariant MDP (Sec.~\ref{gmdp}) \citep{wang2022equivariant, wang2022mathrm, mondal2022eqr, van2020mdp, tangri2024equivariant}. However, many continuous control tasks (e.g., robotics) violate these assumptions in certain regions of the state–action space. We begin by analyzing how such \textbf{symmetry-breaking} perturbs Bellman backups and subsequently propagates into the learned value function.

Let $\mathcal{M}_N(\mathcal{S}, \mathcal{A}, R_N, P_N, \gamma)$ denote the standard environment, and let $\mathcal{M}_E(\mathcal{S},\mathcal{A},R_E,P_E,\gamma)$ be a group-invariant MDP defined on the same spaces. To construct such a group invariant MDP from $\mathcal M_N$, we average the original rewards and dynamics over the symmetry group $G$:
\[
R_E(s,a) = \int_G R_N(gs,ga)\ d\mu(g)\quad P_E(s'|s,a)=\int_G P_N(gs'|gs,ga) \ d\mu(g),
\]
where $d\mu(g)$ is the normalized Haar measure on $G$ (uniform measure for finite groups). This averaging ensures that $R_E$ and $P_E$ satisfy the group-invariance condition, thereby making $\mathcal M_E$ the canonical group-invariant approximation of $\mathcal M_N$. For $(s,a)\in \mathcal S\times \mathcal A$, define pointwise discrepancies between $\mathcal M_N$ and $\mathcal M_E$ via
\begin{equation} \label{eq:epsilons}
    \begin{aligned}
        \epsilon_R(s,a) &:= |R_N(s,a) - R_E(s,a)|, \\
        \epsilon_P(s,a) &:= \tfrac{1}{2} \int_{\mathcal S} \big| P_N(s' \mid s,a) - P_E(s' \mid s,a) \big| \, ds',
    \end{aligned}
\end{equation}
where $\epsilon_R$ is the absolute reward difference and $\epsilon_P$ is the total-variation distance between next-state kernels. Using these pointwise discrepancies, we formalize per-state–action symmetry-breaking as follows.

\begin{definition}[Per-state–action symmetry-breaking]\label{def:sym-break}
    Consider the true MDP $\mathcal M_N$. We say that symmetry is \textbf{preserved} at $(s,a)$ if both $\epsilon_R(s,a) = 0$ and $\epsilon_P(s,a) = 0$; in this case, the group-invariant approximation $\mathcal M_E$ locally coincides with $\mathcal M_N$. Conversely, if $\epsilon_R(s,a) > 0$ or $\epsilon_P(s,a) > 0$, we say that symmetry is \textbf{broken} at $(s,a)$.
\end{definition}

In the terminology of \citet{wang2023general}, state–action pairs with
$\epsilon_R(s,a)>0$ or $\epsilon_P(s,a)>0$ fall into symmetry-breaking regimes
(e.g., incorrect equivariance), where the assumed group action does not match
the true MDP dynamics. Our formalism captures this mismatch at the MDP level via
the reward and transition discrepancies $(\epsilon_R,\epsilon_P)$ and the
resulting Bellman error in Lemma~\ref{lem:one-step}.

% Let $\mathcal T_i$ denote the Bellman optimality operator in MDP $i\in \{N, E\}$. Assume rewards are bounded as $|R_i(s,a)|\le R_{\max}$ and discount $\gamma\in(0,1)$. Define $V_{\max}:=R_{\max}/(1-\gamma)$. \\
We now quantify how these local mismatches perturb a single Bellman backup and how this error propagates to the optimal value functions. Let $\mathcal T_i$ denote the Bellman optimality operator in MDP $i\in \{N, E\}$. We assume rewards are uniformly bounded as $|R_i(s,a)|\le R_{\max}$ and that the discount factor satisfies $\gamma\in(0,1)$; define $V_{\max}:=R_{\max}/(1-\gamma)$.

\begin{lemma}[One-step Bellman error]
\label{lem:one-step}
For any bounded $Q$ and any $(s,a)\in\mathcal S\times \mathcal A$, 
\[
\big|(\mathcal T_N Q)(s,a) - (\mathcal T_E Q)(s,a)\big|
\le \epsilon_R(s,a) + 2\gamma \,\|V_Q\|_\infty\, \epsilon_P(s,a).
\]
\end{lemma}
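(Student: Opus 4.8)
The plan is to unfold both sides using the definition of the Bellman optimality operator and then bound the reward and transition discrepancies separately. Writing $V_Q(s') := \max_{a'} Q(s',a')$, which is well-defined and bounded since $Q$ is bounded, I would first record the identity
\begin{align*}
(\mathcal T_N Q)(s,a) - (\mathcal T_E Q)(s,a)
&= \big(R_N(s,a) - R_E(s,a)\big) \\
&\quad + \gamma \int_{\mathcal S} \big(P_N(s'\mid s,a) - P_E(s'\mid s,a)\big)\, V_Q(s')\, ds',
\end{align*}
and then apply the triangle inequality. The reward contribution is exactly $\epsilon_R(s,a)$ by its definition in equation~\ref{eq:epsilons}, so the work reduces to controlling the integral term.

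For that term, the key step is to use that $\Delta(s') := P_N(s'\mid s,a) - P_E(s'\mid s,a)$ has zero total mass over $\mathcal S$ (both are probability kernels), so a H\"older-type estimate gives
\[
\Big| \int_{\mathcal S} \Delta(s')\, V_Q(s')\, ds' \Big| \;\le\; \|V_Q\|_\infty \int_{\mathcal S} |\Delta(s')|\, ds' \;=\; 2\,\|V_Q\|_\infty\, \epsilon_P(s,a),
\]
where the final equality is just the definition of $\epsilon_P$ as a total-variation distance, the factor $2$ cancelling the $\tfrac{1}{2}$ in equation~\ref{eq:epsilons}. Multiplying by $\gamma$ and adding the reward term yields the claimed bound.

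Honestly there is no real obstacle: the argument is a one-line triangle inequality followed by a standard total-variation estimate, and the only genuine decision is how tightly to bound the transition term. One could instead subtract a constant from $V_Q$ before applying H\"older (legitimate precisely because $\int_{\mathcal S}\Delta = 0$) to replace $\|V_Q\|_\infty$ by the span seminorm $\tfrac{1}{2}\big(\sup_{s'} V_Q(s') - \inf_{s'} V_Q(s')\big)$, but since the lemma is stated with the coarser constant I would keep the simplest version. I would also remark in passing that in the intended application $Q$ is a Bellman iterate from a bounded initialization, so $\|V_Q\|_\infty \le V_{\max}$, which is what makes the downstream value-propagation bound uniform.
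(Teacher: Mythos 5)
Your proof is correct and follows essentially the same route as the paper's: triangle inequality to isolate the reward term as $\epsilon_R(s,a)$, then the standard total-variation/H\"older bound $\big|\int (P_N-P_E)V_Q\big| \le \|V_Q\|_\infty \int |P_N-P_E| = 2\|V_Q\|_\infty\,\epsilon_P(s,a)$ on the transition term. One small remark: the zero-total-mass property of $\Delta$ is not actually used in the bound you invoke (it would only matter for the span-seminorm refinement you mention in passing), so calling it ``the key step'' is slightly misleading, but this does not affect correctness.
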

Here $V_Q(s')=\max_{a'}Q(s',a')$ and $\|V_Q\|_\infty \le \|Q\|_\infty$. If $Q$ is an action--value function, then $\|Q\|_\infty \le V_{\max}$, hence  $\|V_Q\|_\infty \le V_{\max}$ and we  define the pointwise bound
\begin{equation}   
\label{delta}
\delta(s,a):=\epsilon_{R}(s,a)+2 \gamma V_{\max} \epsilon_{P}(s,a).
\end{equation}

We next show that this local error lifts to a global bound on the optimal action--value functions via contraction. 

\begin{proposition}[Value-function gap]
\label{prop}
Let $Q^*_i $ be the optimal action--value function in MDP $i$. Then
\[
    \|Q^*_N-Q^*_E\|_\infty \le \frac{1}{1-\gamma} \| \delta \|_\infty.
\]
\end{proposition}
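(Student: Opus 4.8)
The plan is to combine the one-step bound from Lemma~\ref{lem:one-step} with the standard contraction argument for the Bellman optimality operator. First I would recall that $\mathcal T_N$ and $\mathcal T_E$ are each $\gamma$-contractions in $\|\cdot\|_\infty$ on the space of bounded $Q$-functions, so that $Q^*_N$ and $Q^*_E$ are their unique fixed points, i.e.\ $\mathcal T_N Q^*_N = Q^*_N$ and $\mathcal T_E Q^*_E = Q^*_E$. Then I would write the telescoping split
\[
\|Q^*_N - Q^*_E\|_\infty
= \|\mathcal T_N Q^*_N - \mathcal T_E Q^*_E\|_\infty
\le \|\mathcal T_N Q^*_N - \mathcal T_E Q^*_N\|_\infty + \|\mathcal T_E Q^*_N - \mathcal T_E Q^*_E\|_\infty.
\]

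For the first term I would apply Lemma~\ref{lem:one-step} pointwise with $Q = Q^*_N$: since $Q^*_N$ is an action--value function, $\|V_{Q^*_N}\|_\infty \le \|Q^*_N\|_\infty \le V_{\max}$, so the lemma gives $|(\mathcal T_N Q^*_N)(s,a) - (\mathcal T_E Q^*_N)(s,a)| \le \epsilon_R(s,a) + 2\gamma V_{\max}\,\epsilon_P(s,a) = \delta(s,a) \le \sup_{s,a}\delta(s,a)$, hence the first term is bounded by $\sup_{s,a}\delta(s,a)$. For the second term I would use the $\gamma$-contraction of $\mathcal T_E$ to get $\|\mathcal T_E Q^*_N - \mathcal T_E Q^*_E\|_\infty \le \gamma\,\|Q^*_N - Q^*_E\|_\infty$. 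Substituting back yields
\[
\|Q^*_N - Q^*_E\|_\infty \le \sup_{s,a}\delta(s,a) + \gamma\,\|Q^*_N - Q^*_E\|_\infty,
\]
and rearranging (using $\gamma < 1$ and the finiteness of $\|Q^*_N - Q^*_E\|_\infty$, which follows from both being bounded by $V_{\max}$) gives the claimed bound $\|Q^*_N - Q^*_E\|_\infty \le \frac{1}{1-\gamma}\sup_{s,a}\delta(s,a)$.

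The only genuinely delicate point, rather than an obstacle, is making sure the bound applies to $Q = Q^*_N$: Lemma~\ref{lem:one-step} is stated for ``any bounded $Q$,'' and $Q^*_N$ is indeed bounded with $\|V_{Q^*_N}\|_\infty \le V_{\max}$, so the refined constant $2\gamma V_{\max}$ in the definition of $\delta$ is legitimate here; one should state this explicitly. I would also note that one could symmetrically expand around $\mathcal T_N$ instead, applying the lemma at $Q^*_E$ — either choice works since $\|V_{Q^*_E}\|_\infty \le V_{\max}$ as well — so the result is not sensitive to which fixed point we perturb around. Everything else is the routine ``geometric series'' bookkeeping familiar from approximate dynamic programming, so no substantial difficulty is expected.
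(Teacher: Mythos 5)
Your proof is correct and follows essentially the same argument as the paper: insert an intermediate term between the two fixed points, bound one piece by Lemma~\ref{lem:one-step} (with $\|V_Q\|_\infty \le V_{\max}$) and the other by $\gamma$-contraction, then rearrange. The only difference is cosmetic: the paper inserts $\mathcal T_N Q^*_E$ (applying the lemma at $Q^*_E$ and contraction to $\mathcal T_N$) while you insert $\mathcal T_E Q^*_N$ -- the mirror-image split you yourself note is interchangeable.
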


The proofs of Lemma~\ref{lem:one-step} and Proposition~\ref{prop} are provided in Appendix~\ref{problem}

\emph{Intuition.} Local symmetry-breaking introduces a one-step Bellman backup error $\delta(s,a)$ which propagates through repeated backups and is amplified by the factor ${(1-\gamma)}^{-1}$ due to contraction. This results in a global deviation bounded by $\frac{1}{1-\gamma}\|\delta\|_\infty$, which can cause suboptimal policies or unstable training. We visualize this propagation in a Grid-World example, and show that a strictly equivariant policy can fail to learn (Appendix~\ref{problem_fig}). Prior works mitigate such errors with global relaxations \citep{finzi2021residual, park2024approximate}, whereas our approach employs \textbf{local} corrections that are less conservative and effective when symmetry holds only piecewise.

\section{Partial Group-Invariance in Markov Decision Processes}

In what follows, we present an efficient method for handling local symmetry-breaking. Specifically, we propose a \textbf{Partially Group-Invariant MDP (PI-MDP)} that interpolates, for each state--action pair, between a group-invariant MDP and the true environment.

\subsection{Partially Group-Invariant MDP}\label{sec:pimdp}
\begin{definition}[PI-MDP]\label{def:pimdp}
Let the true MDP be $\mathcal{M}_N=(\mathcal S, \mathcal A, R_N, P_N, \gamma)$ and the group-invariant MDP be $\mathcal{M}_E=(\mathcal S, \mathcal{A}, R_E, P_E, \gamma)$, sharing the same $(\mathcal{S}, \mathcal{A}, \gamma)$. Define a Partially Group-Invariant MDP (PI-MDP) $\mathcal{M}_H=(\mathcal{S}, \mathcal{A}, R_H, P_H, \lambda, \gamma)$ with a measurable gating function $\lambda:\mathcal{S}\times\mathcal{A}\to[0,1]$,
\begin{align*}
R_H(s,a)&:=(1-\lambda(s,a))\,R_E(s,a) + \lambda(s,a)\,R_N(s,a),\\
P_H(\cdot\mid s,a)&:=(1-\lambda(s,a))\,P_E(\cdot\mid s,a) + \lambda(s,a)\,P_N(\cdot\mid s,a).
\end{align*}
\end{definition}

Since $0\le \lambda(s,a)\le 1$ for all $(s,a)$ and both $(R_E,P_E)$ and $(R_N,P_N)$ are valid, $(R_H,P_H)$ defines a valid MDP. We then characterize the partially group-invariant optimality operator induced by the gating function.
\medskip
\begin{theorem}[Partially group-invariant optimality operator]\label{pibellman}
Let $\mathcal T_i$ denote the (hard) Bellman optimality operator in MDP $i\in\{E,N,H\}$, $(\mathcal T_i Q)(s,a)=R_i(s,a) +\gamma\,\mathbb E_{s'\sim P_i(\cdot\mid s,a)}\!\left[\max_{a'} Q(s',a')\right].$
For any bounded $Q:\mathcal S\times\mathcal A\to\mathbb R$ and all $(s,a)$,
\begin{equation}\label{eq:affinity}
(\mathcal T_H Q)(s,a)
=(1-\lambda(s,a))\,(\mathcal T_E Q)(s,a)+\lambda(s,a)\,(\mathcal T_N Q)(s,a).
\end{equation}
If $|R_E|,|R_N|\le R_{\max}$ and $\gamma\in(0,1)$, then $\mathcal T_H$ is a $\gamma$-contraction and has a unique fixed point $Q_H^*$.
\end{theorem}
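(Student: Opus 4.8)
The plan is to establish the two claims in sequence: first the pointwise affinity identity~\eqref{eq:affinity}, then the contraction property together with existence and uniqueness of the fixed point via Banach's theorem.

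For the affinity identity, I would expand $(\mathcal T_H Q)(s,a)$ directly from Definition~\ref{def:pimdp}. Abbreviating $\lambda:=\lambda(s,a)$ and writing $V_Q(s'):=\max_{a'}Q(s',a')$, which is bounded because $Q$ is, substitute $R_H=(1-\lambda)R_E+\lambda R_N$ and $P_H(\cdot\mid s,a)=(1-\lambda)P_E(\cdot\mid s,a)+\lambda P_N(\cdot\mid s,a)$ into $(\mathcal T_H Q)(s,a)=R_H(s,a)+\gamma\int_{\mathcal S}P_H(s'\mid s,a)\,V_Q(s')\,ds'$. The only observation needed is that the expectation functional $P\mapsto\int_{\mathcal S}P(s'\mid s,a)\,V_Q(s')\,ds'$ is linear in the kernel, so the integral term splits as $(1-\lambda)\gamma\int_{\mathcal S}P_E\,V_Q\,ds'+\lambda\gamma\int_{\mathcal S}P_N\,V_Q\,ds'$; regrouping the $(1-\lambda)$ and $\lambda$ contributions reproduces exactly $(1-\lambda)(\mathcal T_E Q)(s,a)+\lambda(\mathcal T_N Q)(s,a)$. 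Boundedness of $Q$, hence of $V_Q$, together with the fact that $P_E(\cdot\mid s,a)$ and $P_N(\cdot\mid s,a)$ are probability measures, makes every integral finite, so no integrability issue arises.

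For the contraction property, I would combine the identity just proved with the standard fact that each $\mathcal T_i$, $i\in\{E,N\}$, is a $\gamma$-contraction in $\|\cdot\|_\infty$ (a consequence of non-expansiveness of the $\max$ operation and of $P_i(\cdot\mid s,a)$ being a probability measure, via $|\max_{a'}Q_1(s',a')-\max_{a'}Q_2(s',a')|\le\|Q_1-Q_2\|_\infty$). For bounded $Q_1,Q_2$ and any $(s,a)$, the triangle inequality applied to~\eqref{eq:affinity} gives $|(\mathcal T_H Q_1)(s,a)-(\mathcal T_H Q_2)(s,a)|\le(1-\lambda)\,|(\mathcal T_E Q_1-\mathcal T_E Q_2)(s,a)|+\lambda\,|(\mathcal T_N Q_1-\mathcal T_N Q_2)(s,a)|\le\gamma\|Q_1-Q_2\|_\infty$, using $\lambda\in[0,1]$ and the two contraction bounds; taking the supremum over $(s,a)$ yields $\|\mathcal T_H Q_1-\mathcal T_H Q_2\|_\infty\le\gamma\|Q_1-Q_2\|_\infty$. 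One could instead argue directly, bounding the backup by $\gamma\int_{\mathcal S}P_H(s'\mid s,a)\|Q_1-Q_2\|_\infty\,ds'=\gamma\|Q_1-Q_2\|_\infty$ since $P_H$ is itself a probability kernel, but reusing the affinity identity is cleaner.

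To finish, I would note that $\mathcal T_H$ maps bounded functions to bounded functions, since $\|\mathcal T_H Q\|_\infty\le R_{\max}+\gamma\|Q\|_\infty<\infty$ under $|R_E|,|R_N|\le R_{\max}$, so $\mathcal T_H$ is a $\gamma$-contraction on the complete metric space of bounded functions on $\mathcal S\times\mathcal A$ equipped with the sup-norm; Banach's fixed-point theorem then yields a unique $Q_H^*$ with $\mathcal T_H Q_H^*=Q_H^*$, and iterating the bound shows $\|Q_H^*\|_\infty\le R_{\max}/(1-\gamma)=V_{\max}$. I do not expect a genuine obstacle; the only points requiring care are (i) confirming that the convex-combination structure with the common weight $\lambda(s,a)$ survives the Bellman backup, which is immediate from linearity of the expectation in the kernel, and (ii) invoking the measurability of $\lambda$ assumed in Definition~\ref{def:pimdp} so that $R_H$ and $P_H$ are well-defined and the integrals are legitimate.
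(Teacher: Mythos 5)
Your proposal is correct and follows essentially the same route as the paper's proof: expand $(\mathcal T_H Q)(s,a)$ via Definition~\ref{def:pimdp} and linearity of the expectation in the kernel to get the affinity identity, then combine the triangle inequality with the $\gamma$-contractivity of $\mathcal T_E$ and $\mathcal T_N$ and $\lambda\in[0,1]$, and conclude with Banach's fixed-point theorem on bounded functions. Your additional remarks on measurability, boundedness, and the alternative direct argument via $P_H$ being a probability kernel are fine but not needed beyond what the paper does.
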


We next bound the deviation of the fixed point from the true optimum.

\begin{corollary}[Proximity bound]\label{cor:proximity}
Let $Q_N^*$ be the optimal action--value of the true MDP $\mathcal M_N$,
% and let $V_N(s)=\max_{a}Q_N^*(s,a)$.
and let $\delta(s,a)$ be the one-step pointwise Bellman error bound.
Then
\begin{equation}\label{eq:proximity}
\|Q_H^*-Q_N^*\|_\infty
\;\le\;\frac{1}{1-\gamma}\;
\Big\| (1-\lambda)\,
\delta
\Big\|_\infty.
\end{equation}
\end{corollary}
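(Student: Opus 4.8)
The plan is to run the standard fixed-point perturbation (``simulation lemma'') argument, using the affine decomposition of $\mathcal T_H$ from Theorem~\ref{pibellman} together with the pointwise one-step bound from Lemma~\ref{lem:one-step}. First I would record the setup: $Q_N^*$ is the bounded fixed point of $\mathcal T_N$ and $Q_H^*$ is the bounded fixed point of $\mathcal T_H$ (the latter exists by Theorem~\ref{pibellman}), and in particular $Q_N^*$ is a genuine optimal action--value function, so $\|Q_N^*\|_\infty\le V_{\max}$ and hence $\|V_{Q_N^*}\|_\infty\le V_{\max}$ — which is exactly the condition under which $\delta(s,a)=\epsilon_R(s,a)+2\gamma V_{\max}\,\epsilon_P(s,a)$ bounds the quantity $|(\mathcal T_N Q_N^*)(s,a)-(\mathcal T_E Q_N^*)(s,a)|$ via Lemma~\ref{lem:one-step}.

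The second step is to evaluate the operator discrepancy at $Q_N^*$. By the affine identity \eqref{eq:affinity}, $\mathcal T_H-\mathcal T_N=(1-\lambda)(\mathcal T_E-\mathcal T_N)$ pointwise, so
\[
(\mathcal T_H Q_N^*)(s,a)-(\mathcal T_N Q_N^*)(s,a)=\bigl(1-\lambda(s,a)\bigr)\bigl[(\mathcal T_E Q_N^*)(s,a)-(\mathcal T_N Q_N^*)(s,a)\bigr].
\]
Taking absolute values, using $1-\lambda(s,a)\ge 0$, and applying Lemma~\ref{lem:one-step} yields the pointwise estimate $\bigl|(\mathcal T_H Q_N^*)(s,a)-(\mathcal T_N Q_N^*)(s,a)\bigr|\le \bigl(1-\lambda(s,a)\bigr)\,\delta(s,a)$, hence $\|\mathcal T_H Q_N^*-\mathcal T_N Q_N^*\|_\infty\le\|(1-\lambda)\,\delta\|_\infty$.

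The third step chains the triangle inequality with the $\gamma$-contraction of $\mathcal T_H$ (Theorem~\ref{pibellman}): since $Q_H^*=\mathcal T_H Q_H^*$ and $Q_N^*=\mathcal T_N Q_N^*$,
\begin{align*}
\|Q_H^*-Q_N^*\|_\infty
&=\|\mathcal T_H Q_H^*-\mathcal T_N Q_N^*\|_\infty\\
&\le\|\mathcal T_H Q_H^*-\mathcal T_H Q_N^*\|_\infty+\|\mathcal T_H Q_N^*-\mathcal T_N Q_N^*\|_\infty\\
&\le\gamma\,\|Q_H^*-Q_N^*\|_\infty+\|(1-\lambda)\,\delta\|_\infty.
\end{align*}
Since both $Q_H^*$ and $Q_N^*$ are bounded, $\|Q_H^*-Q_N^*\|_\infty<\infty$, so I may rearrange to obtain $(1-\gamma)\|Q_H^*-Q_N^*\|_\infty\le\|(1-\lambda)\,\delta\|_\infty$, which gives \eqref{eq:proximity} after dividing by $1-\gamma$.

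The argument is otherwise routine; the only point that needs care — and the one I would flag as the ``obstacle'' — is the very first step, namely making sure the Lemma is invoked on a $Q$ whose induced value function is $V_{\max}$-bounded, so that the specific $\delta$ written in the corollary is indeed the applicable bound. (One can equivalently perturb at $Q_H^*$ rather than $Q_N^*$, using $\|Q_H^*\|_\infty\le V_{\max}$ since $|R_H|\le R_{\max}$; either choice works and there is no substantive difficulty beyond this bookkeeping.)
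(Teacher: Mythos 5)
Your proposal is correct and follows essentially the same route as the paper: the triangle inequality around $\mathcal T_H Q_N^*$, the $\gamma$-contraction of $\mathcal T_H$, the affinity identity to write $\mathcal T_H Q_N^*-\mathcal T_N Q_N^*=(1-\lambda)(\mathcal T_E Q_N^*-\mathcal T_N Q_N^*)$, and the pointwise bound by $\delta(s,a)$ (the paper re-derives this bound via the total-variation argument at $Q_N^*$, whereas you simply invoke Lemma~\ref{lem:one-step} with the observation $\|V_{Q_N^*}\|_\infty\le V_{\max}$, which is the same content). No gaps.
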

Moreover, the right-hand side of Eq.~\eqnref{eq:proximity} is zero whenever, at every $(s,a)$, either $\lambda(s,a)=1$ (the gating function routes to the true MDP) or the group-invariant MDP coincides with the true MDP at $(s,a)$, that is, $R_E(s,a) = R_N(s,a)$ and $P_E(\cdot\mid s,a)=P_N(\cdot \mid s,a)$.
Consequently, symmetric pairs contribute zero via MDP coincidence, and symmetry-breaking pairs contribute zero when $\lambda$ correctly gates to $1$. The proofs of Theorem~\ref{pibellman} and Corollary~\ref{cor:proximity} can be found in Appendix~\ref{app:pi-bellman-proof}.

\emph{Intuition.}
By gating the reward and transition kernels, the PI-MDP is itself a valid MDP. Its optimality operator satisfies the affinity identity in Eq.~\eqnref{eq:affinity}. Since a convex combination of $\gamma$-contraction is again a $\gamma$-contraction, $\mathcal T_H$ admits a unique fixed point $Q_H^*$. Corollary~\ref{cor:proximity} bounds the deviation from the true optimum: the gap $\|Q_H^* - Q_N^*\|_\infty$ is controlled by the gated mismatch term on the right-hand side of Eq.~\eqnref{eq:proximity}, scaled by $(1-\gamma)^{-1}$. 
% The bound \emph{vanishes} whenever, at every $(s,a)$, either the gating function routes to the true MDP ($\lambda=1$) or the group-invariant and true MDPs coincide. 
When $\lambda$ correctly localizes symmetry-breaking, $Q_H^*$ closely tracks $Q_N^*$ while reverting to the group-invariant MDP where symmetry holds. We provide the extension of the PI-MDP to the entropy-regularized (soft) setting in Appendix~\ref{app:pimdp-soft}.

\begin{remark}[Hard gating]\label{remark:hard gate}
All results above hold for any measurable gating function $\lambda:\mathcal S\times \mathcal A\to[0,1]$. When $\lambda(s,a)\in\{0,1\}$, the PI-MDP routes pointwise to $(R_E,P_E)$ on symmetric pairs and $(R_N,P_N)$ otherwise. In our algorithms, we adopt this hard-gating regime for simplicity and empirically more stable training (Fig.~\ref{ablation:soft_lam}).
\end{remark}

\section{Partially Equivariant Reinforcement Learning}
This section introduces partially equivariant reinforcement learning (Algorithm~\ref{alg:perlr}) for the PI-MDP setting (Sec.~\ref{sec:pimdp}). We (i) learn a gating function $\lambda(s,a)$ that localizes symmetry-breaking, and (ii) couple $\lambda$ to equivariant and unconstrained value/policy heads.

\subsection{Learning $\lambda(s,a)$ via disagreement supervision}
\label{sec:lambda}
By Corollary~\ref{cor:proximity}, the value gap vanishes when $\lambda(s,a)=1$ on symmetry-breaking pairs and $\lambda(s,a)=0$ where the proxy and true MDPs coincide (assuming an oracle binary gate under local symmetry). To approximate this behavior, we train a gating function $\lambda_\omega(s,a)\in\{0,1\}$ using the \emph{disagreement} between two one-step predictors: an equivariant predictor $\hat P_E:\mathcal S\times \mathcal A\to \mathbb R^n$ constrained to respect the group symmetries of $\mathcal M_E$ and an unconstrained predictor $\hat P_N:\mathcal S\times \mathcal A\to \mathbb R^n$, where $n$ is the dimension of the predictor output.

Concretely, $\hat P_E$ and $\hat P_N$ are trained on transitions $(s,a,r,s')$ to approximate the one-step MDP components, i.e., the transition kernel and, when used, the reward function. We define a scalar disagreement score
\[
d(s,a)=D\big(\hat P_E(\cdot \mid s,a),\,\hat P_N(\cdot\mid s,a)\big),
\]
where $D$ is a discrepancy measure (e.g., squared error between next-state predictions, total-variation distance between transition distributions, or an $\ell_1$ distance between predicted rewards). Detailed implementations of $\hat P_E$, $\hat P_N$, and $D$ are given in Appendix~\ref{app:lambda-details}.

At symmetric pairs (i.e., where $\epsilon_R(s,a)=\epsilon_P(s,a)=0$), the equivariant predictor $\hat P_E$ is consistent with $\mathcal M_E$, which coincides with the true MDP $\mathcal M_N$, and the unconstrained predictor $\hat P_N$ can match the same behavior. In this case, the disagreement $d(s,a)$ remains small. At symmetry-breaking pairs, the assumption of group-invariance is violated: by construction, $\hat P_E$ can only represent the group-averaged surrogate $P_E$, whereas $\hat P_N$ can approximate the true kernel $P_N$, so their predictions diverge. Consequently, $d(s,a)$ tends to be larger precisely in those regions where $(R_E, R_N)$ or $(P_E, P_N)$ disagree, providing an indirect detector of symmetry-breaking. We model symmetry-breaking transitions as belonging to the upper tail of the empirical distribution of disagreement scores $d(s,a)$ and use these high-disagreement samples to define pseudo-labels $y(s,a)\in\{0,1\}$ for training $\lambda_\omega$ with a binary cross-entropy loss:

\begin{equation}\label{lambda_q_obj}
\mathcal L_\lambda(\omega)
=\mathbb E_{(s,a)\sim\mathcal D}\!\big[
-y(s,a)\log\lambda_\omega(s,a)
-(1-y(s,a))\log(1-\lambda_\omega(s,a))
\big],
\end{equation}
where $\mathcal D$ is the replay buffer. The gating network is optimized only through Eq.~\eqnref{lambda_q_obj}; during RL updates of the value function and policy, we treat $\lambda_\omega$ as fixed and do not backpropagate RL gradients into its parameters (see Appendix~\ref{app:lambda-details} for implementation details).

\begin{algorithm}[t!]
\caption{Partially Equivariant Reinforcement Learning (PERL)}
\label{alg:perlr}
\begin{algorithmic}[1]
\Require Replay buffer $\mathcal D$, critics $Q_E,Q_N$, policies $\pi_E,\pi_N$
\Require Dynamics predictors $\hat P_E,\hat P_N$, gating functions $\lambda_\omega,\lambda_\zeta$, targets $\bar Q,\bar\lambda_\omega$
\State Initialize all networks
\State Initialize running statistics $(\mu,\sigma)$ for disagreement 
\For{$t=1$ to $T$}
  \State Sample $a_t\sim \pi_\phi(\cdot\mid s_t)$ \Comment{gated policy from Eq.~\eqnref{policy}} 
  \State Store $(s_t,a_t,r_t,s_{t+1})$ in $\mathcal D$
  \State Train predictors $\hat P_E$ and $\hat P_N$ to minimize the predictive loss $\mathcal L_{\text{pred}}$ \Comment{see App.~\ref{app:lambda-details}}
  \State Compute disagreement $d(s,a)=D\big(\hat P_E(\cdot\mid s,a), \hat P_N(\cdot\mid s,a)\big)$ \Comment{see Sec.~\ref{sec:lambda}, App.~\ref{app:lambda-details}}
  \State Update the running statistics $(\mu,\sigma)$ over the disagreement $d(s,a)$
  \State Update $\lambda_\omega$ and $\lambda_\zeta$ using BCE-loss (Eq.~\eqnref{lambda_q_obj}) and
  % \State Update $\lambda_\zeta$ with 
  expectile regression  (Eq.~\eqnref{lam_p_obj})
  \State Update the critics with the objective (Eq.~\eqnref{eq:critic_obj})
  \State Update the actor with the objective (Eq.~\eqnref{eq:actor_obj}) \Comment{SAC only; DQN uses greedy $\arg\max$}
  \State Soft update $\bar Q$ and $\bar\lambda_\omega$
\EndFor
\end{algorithmic}
\end{algorithm}

\subsection{Partially Equivariant Reinforcement Learning}\label{sec:algorithms}
We couple the learned gating function to the critic and the actor, thereby implementing the PI-MDP framework under function approximation while training entirely in the true environment $\mathcal M_N$.

\paragraph{Gated value mixtures under the true MDP.}
We parameterize the critic as a gated mixture:
\begin{equation} \label{q_h}
    Q_\theta(s,a)
    \;=\; \big(1-\lambda_\omega(s,a)\big)\, Q_{E,\theta}(s,a)\;+\;\lambda_\omega(s,a)\, Q_{N,\theta}(s,a),
\end{equation}
where $Q_E$ is an equivariant critic constrained by group symmetries and $Q_N$ is an unconstrained critic with no symmetry bias. 
The gating function $\lambda_\omega:\mathcal S\times\mathcal A \to \{0,1\}$ routes between the two networks.
Conditioned on the binary gating $\lambda_\omega$ (cached per minibatch and used with stop-gradient), our TD-based critic (e.g., DQN, SAC) learns under $\mathcal M_N$ the best approximation within this mixed hypothesis class. 
With binary gating, the mixture reduces to a hard switch, activating either $Q_E$ or $Q_N$ depending on whether the state–action lies in a symmetric or symmetry-breaking region.

\paragraph{Idealized compatibility (binary oracle gating).}
If $\lambda(s,a)\in\{0,1\}$ \emph{perfectly} separates symmetric from broken regions and, on symmetric regions, the averaged dynamics coincide $(P_E,R_E)=(P_N,R_N)$, then the partially group-invariant operator $\mathcal T_H$ is identical to the true operator $\mathcal T_N$. 
In this idealized case, our TD targets exactly match $(\mathcal T_H Q)(s,a)$ and the mixture recovers the interpolating solution in Theorem~\ref{pibellman}. 
This motivates the use of $\lambda$ as a “local oracle” for symmetry-breaking. In practice, we approximate this oracle by the learned gating function $\lambda_\omega$, producing binary decisions as described above.

\paragraph{Gated policy and actor gating function.}
For the policy, we employ a state-only gating function $\lambda_\zeta:\mathcal S\to\{0,1\}$ and define a product-of-experts (PoE) blend
\begin{equation}\label{policy}
    \pi_\phi(\cdot\mid s)\;\propto\;\pi_{E,\phi}(\cdot\mid s)^{\,1-\lambda_\zeta(s)} \;\pi_{N,\phi}(\cdot\mid s)^{\,\lambda_\zeta(s)}.
\end{equation}
This form naturally arises from SAC policy improvement: given the critic mixture $Q_\theta=(1-\lambda_\omega)Q_E+\lambda_\omega Q_N$, the information projection in SAC yields a PoE between the energy models $\exp(Q_E/\alpha)$ and $\exp(Q_N/\alpha)$ (see Appendix \ref{pihproof} for details). 
While a fully state–action gate in $\pi$ would be theoretically appealing, it is intractable in practice because the normalization constant of Eq.~\eqnref{policy} would depend on $a$. 
Instead, we use a state-only gate $\lambda_\zeta(s)$, which is aligned with the critic gating function via a conservative aggregation loss. This conservativeness is crucial: since symmetry-breaking may occur only for a subset of actions, $\lambda_\zeta(s)$ should activate whenever \emph{any} action at state $s$ is flagged by $\lambda_\omega(s,a)$. 
%This rules out using a mean or minimum aggregator, which could underestimate symmetry-breaking when most actions remain symmetric.
This conservative choice does not compromise optimality, as taking the maximum ensures that any critical symmetry-breaking is accounted for while leaving the optimal policy unchanged.
\begin{equation}\label{lam_p_obj}
    \mathcal L_{\lambda}(\zeta) = \mathbb E_{(s,a)\sim \mathcal D}\Big[L_\tau \big( \lambda_\omega(s,a) - \lambda_\zeta(s)\big)\Big],
\end{equation}
where $L_\tau$ is the expectile loss \citep{kostrikov2021offline}. 
Taking $\tau\rightarrow 1$ approximates the $\max_a$ operator, ensuring that $\lambda_\zeta(s)$ conservatively reflects the maximum symmetry-breaking signal across actions. 
Per sample, Eq.~\eqnref{policy} thus collapses to a hard switch between $\pi_E$ and $\pi_N$, retaining interpretability and computational tractability (details in Appendix~\ref{app:lambda-p-detail}).

Besides the learned state-dependent gate $\lambda_\zeta(s)$, we also consider a sampled-max variant that approximates $\max_a \lambda_\omega(s,a)$ by taking the maximum over $\lambda_\omega(s,a_i)$ for $K$ sampled actions. In our ablations (Fig.~\ref{ablation:single_lam}), $K\in\{4,8\}$ performs similarly to $\lambda_\zeta(s)$, making this a reasonable choice for a lighter architecture, though the learned state gate is slightly more robust when symmetry-breaking is sparse and easy to miss with a few sampled actions.

\paragraph{Training.}
We train $Q_\theta$ and $\pi_\phi$ using standard objectives from deep RL: DQN \citep{mnih2013playing} for value-based methods and SAC \citep{haarnoja2018soft} for actor–critic methods, substituting in our gated parameterizations. 
In this way, the partially equivariant framework is realized within standard off-the-shelf algorithms, while the gates $\lambda_\omega$ and $\lambda_\zeta$ provide adaptive control over when equivariance is exploited and when it is suppressed.
\begin{equation}\label{eq:critic_obj}
J_Q(\theta)
=\mathbb{E}_{(s,a,r,s')\sim\mathcal D}\;\tfrac{1}{2}
\Big(Q_\theta(s,a)
-{r+\gamma\max_{a'} Q_{\bar\theta}(s',a')}\Big)^2, 
\end{equation}
where $\bar \theta$ denotes target parameters and, $Q_\theta(s,a) = (1-\lambda_\omega(s,a))~Q_{E,\theta}(s,a) + \lambda_\omega(s,a)~\ Q_{N,\theta}(s,a)$.
\begin{equation}\label{eq:actor_obj}
J_\pi(\phi)
=\mathbb{E}_{\substack{s\sim\mathcal D\\ \epsilon\sim\mathcal N(0,I)}}\!
\Big[\alpha \log \pi_\phi(a\mid s) \;-\; \min\nolimits_{i=1,2} Q_{\theta_i}(s,a)\Big],
\qquad a=\tanh\big(g_\phi(s,\epsilon)\big).
\end{equation}

where $\alpha$ is the entropy temperature used in SAC \citep{haarnoja2018soft}, and $\log\pi_\phi(a\mid s)=(1-\lambda_\zeta (s))~\log \pi_{E,\phi}(a\mid s) +\lambda_\zeta(s)~\log\pi_{N,\phi}(a\mid s) $. Please refer to Algorithm~\ref{alg:perlr} for the pseudocode, and Appendix~\ref{implementation_detail} for more details.

For the network architecture, we use separate trunks for the critics, the policy,
and the one-step predictors, which provides stable training and a clean
separation between equivariant and non-equivariant components. We also explored
several trunk-sharing variants for parameter efficiency, but they did not consistently improve performance and sometimes harmed stability (see Fig.~\ref{ablation:shared_trunk}).

\section{Experiments}
Our experiments aim to answer two main questions: (1) How does our method compare in terms of sample efficiency against the conventional RL and strictly equivariant methods? (2) How robust is our method to symmetry-breaking, relative to the other approximately equivariant approaches? 

\subsection{Experimental Setup}
% We evaluate across three categories of environments: (1) Grid-World for intuitive analysis, (2) locomotion benchmarks in MuJoCo \citep{brockman2016openai}, and (3) robotic manipulation tasks adapted from the Fetch manipulation \citep{plappert2018multi} and a DeepMind Control Suite (DMC) \citep{tassa2018deepmind}-based UR5e manipulator \citep{repo:mujoco_manip}. 
% We compare our DQN-based (PE-DQN) and SAC-based (PE-SAC) methods against vanilla RL, the strictly equivariant method, and the two approximately equivariant approaches, RPP \citep{finzi2021residual} and Approximately Equivariant RL \citep{park2024approximate}. All experiments use state-based observations, and we report mean performance with standard error over eight random seeds from locomotion tasks and five random seeds otherwise. Fig.~\ref{benchmark} provides an overview of environments, with additional details in Appendix~\ref{experiment_detail}.
We evaluate across discrete and continuous control settings: (1) \textbf{Grid-World} for intuitive analysis, and (2) \textbf{Locomotion} and \textbf{Manipulation} benchmarks using MuJoCo \citep{brockman2016openai}, Fetch Reach \citep{plappert2018multi}, and the UR5e manipulator \citep{tassa2018deepmind, repo_mujoco_manip}. 
To systematically test robustness, we introduce fixed obstacles in the Grid-World that cause the transition kernel to deviate from ideal rotational symmetry. In the continuous tasks, symmetry-breaking naturally arises from realistic constraints like contacts and joint limits.
We compare PE-DQN and PE-SAC against vanilla RL, strictly equivariant methods, and approximately equivariant baselines \citep{finzi2021residual, park2024approximate}. Details on the environment are provided in Appendix~\ref{experiment_detail}. 

\begin{figure}[t]
\centering
\includegraphics[width=1.0\linewidth]{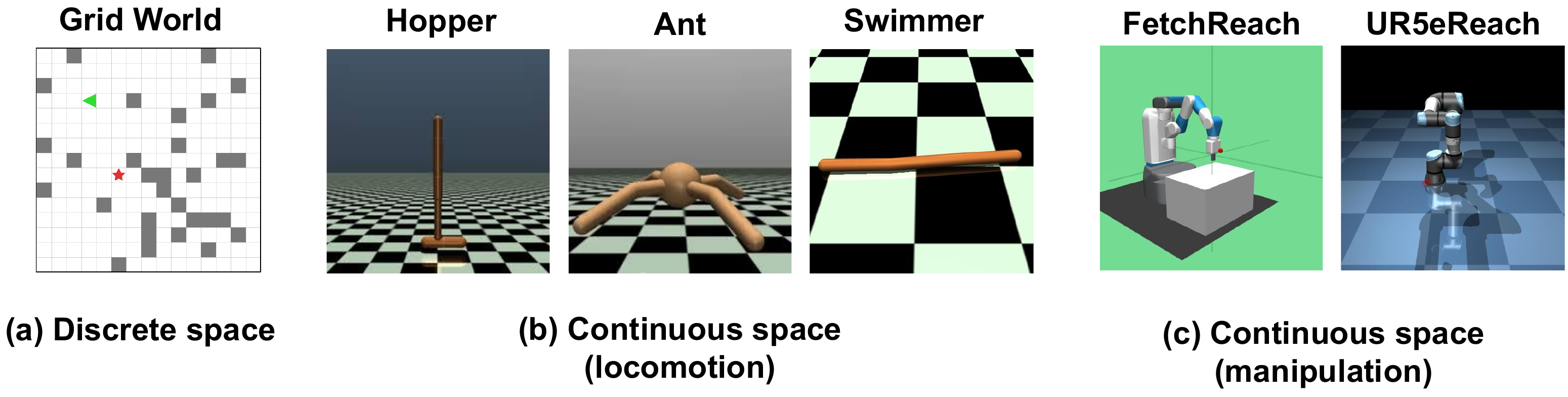}
\caption{\textbf{Benchmark environments.} We evaluate our method across both discrete and continuous control tasks under symmetry-breaking conditions. Specifically, we use the Grid-World environment for the discrete case, and locomotion and manipulation tasks for the continuous case.}
\label{benchmark}
\end{figure}

\paragraph{Grid-World.}
We use a discrete $C_4$-symmetric gridworld as a lightweight testbed for analyzing robustness to symmetry-breaking. The MDP state consists only of the agent and goal coordinates; obstacles are part of the environment layout but are not encoded in this state representation. Symmetry is broken by placing fixed obstacles that cause the induced transition kernel $P_N(s' \mid s,a)$ over these coordinates to deviate from the ideal rotational symmetry. By varying the number of obstacles, we control the degree of symmetry-breaking and can clearly examine how PE-DQN adapts as the extent of symmetry violation increases.

To study reward-level symmetry-breaking, we also consider a variant in which a subset of obstacles becomes passable but returns a negative reward when traversed, leaving the transition structure unchanged while altering the reward function. Finally, we construct a stochastic variant with numerous obstacles, in which actions result in randomized neighboring transitions, creating complex dynamics that test the robustness of our disagreement-based gating under complex dynamics.

\paragraph{Locomotion.}
We evaluate on continuous-control MuJoCo benchmarks using the same symmetry specifications as RPP \citep{finzi2021residual}, which include both exact and approximate symmetries. This setting allows us to test whether PE-SAC can extend the sample-efficiency benefits of equivariance from discrete Grid-World to challenging continuous-control tasks, while remaining robust to symmetry-breaking factors such as external forces or reward perturbations. All baselines are trained with SAC. 

\paragraph{Manipulation.}
We evaluate in manipulation settings, considering two reach tasks with $SO(3)$ symmetry. Fetch Reach serves as a simpler case, where the end-effector is constrained perpendicular to the floor and the goal is specified only by $(x,y,z)$ position. 
In contrast, UR5e Reach allows free end-effector orientation in addition to position, with a goal specified as an $SE(3)$ pose that includes both position and orientation. The inclusion of orientation control makes the task more representative of real-world manipulators. This progression from Fetch to UR5e enables us to test whether PE-SAC scales from constrained to more realistic manipulation scenarios. Symmetry-breaking naturally arises from collisions, floor contacts, and kinematic singularities. All methods use the same SAC backbone for comparability.

\begin{figure}[t]
\centering
\includegraphics[width=0.9\linewidth]{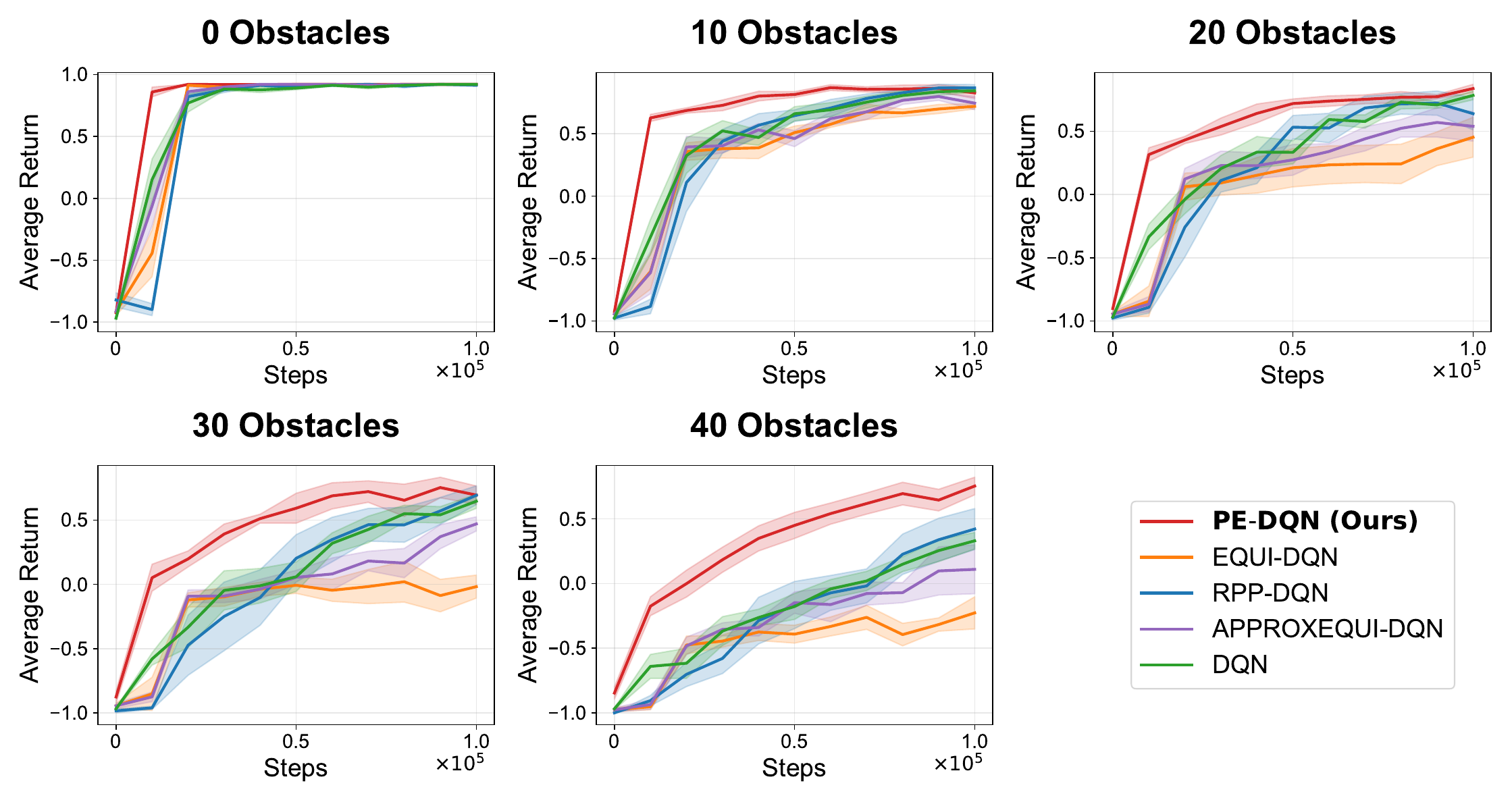}
\caption{\textbf{Performance comparison in the discrete space (Grid-World) environment.} We evaluate the average performance over 100K steps with five random seeds. 
Shaded regions denote standard error. 
We vary the number of obstacles, which act as symmetry-breaking factors. 
PE-DQN consistently outperforms the baselines, and the performance gap widens as symmetry-breaking increases, 
 demonstrating both robustness and sample efficiency.}
\label{exp_discrete}
\end{figure}

\begin{figure}[h!]
\centering
\includegraphics[width=1.0\linewidth]{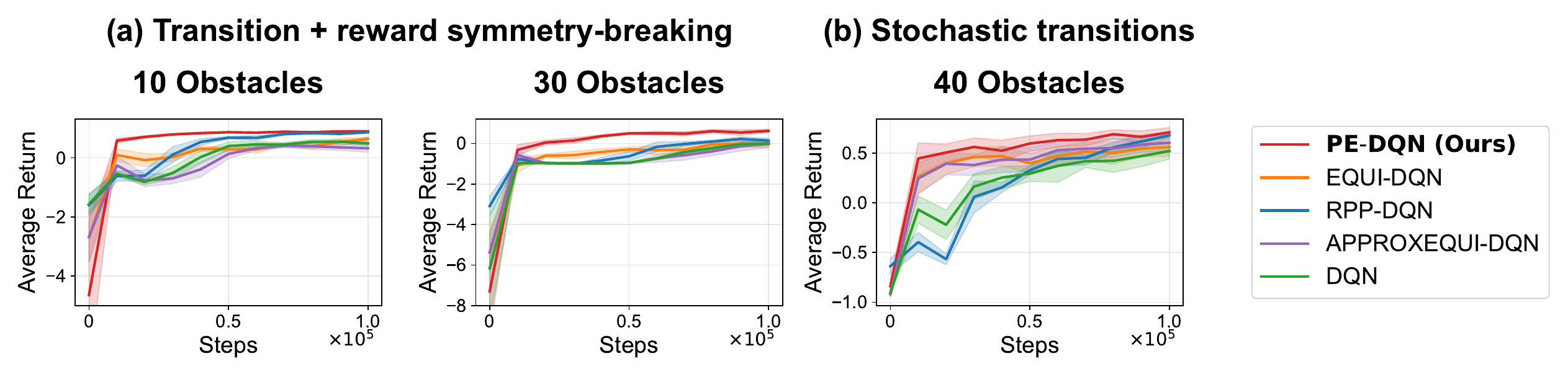}
\caption{\textbf{Performance comparison in Grid-World under reward-level symmetry-breaking and complex dynamics.} Results are averaged over 100K environment steps with five random seeds; shaded regions denote standard error.
\textbf{(a)} Reward-level symmetry-breaking is introduced by making half of the obstacles passable while assigning a negative reward upon traversal, in layouts with 10 and 30 obstacles. \textbf{(b)} Complex dynamics setting with stochastic transitions in 40-obstacle layout. PE-DQN consistently outperforms the baselines in both settings, indicating robustness to reward-level symmetry-breaking and challenging dynamics.}
\label{additional_exp}
\end{figure}

\begin{figure}[t!]
\centering
\includegraphics[width=0.9\linewidth]{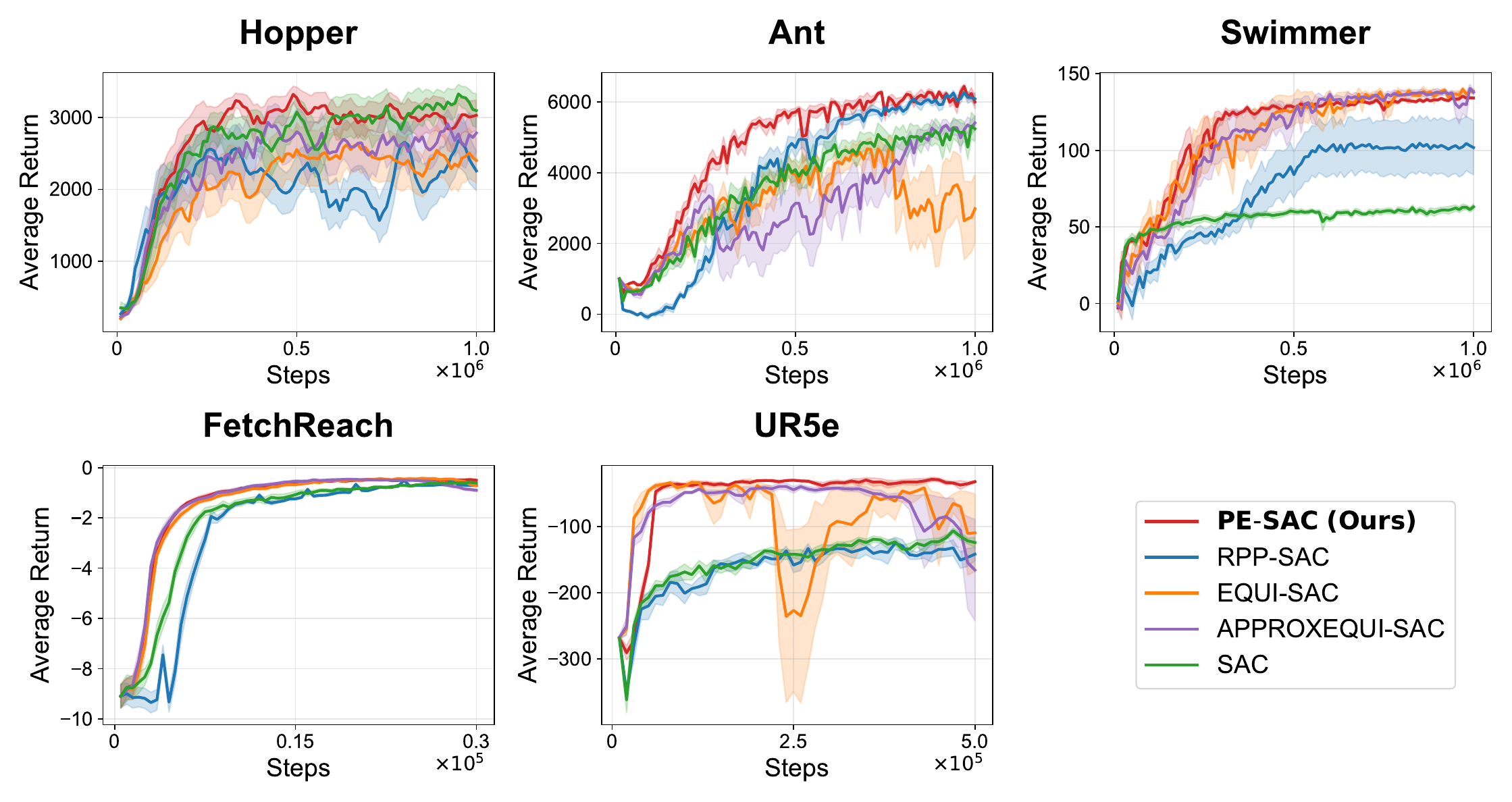}
\caption{\textbf{Performance comparison in the continuous space environments.} Results are averaged over 1M training steps in MuJoCo tasks, and 30K, 500K steps in the Fetch, UR5e Reach environment, using eight random seeds from locomotion tasks and five random seeds from manipulation tasks. 
Shaded regions denote standard error.
For RPP \citep{finzi2021residual}, we re-ran the official code. Discrepancies with the reported numbers arise because RPP reports ``max over steps" rather than average performance. 
PE-SAC consistently outperforms all baselines across these tasks.}
\label{exp_continuous}
\end{figure}

\subsection{Analysis}
In Fig.~\ref{exp_discrete}, we show returns in Grid-World as the number of
obstacles increases. When no symmetry-breaking factors are present, PE-DQN
quickly converges to $\lambda \approx 0$ and behaves like a purely equivariant
agent, matching the performance of strictly equivariant DQN. As obstacles are
added, strictly equivariant DQN degrades much more rapidly than the other
baselines, while approximately equivariant methods offer only minor gains over
vanilla DQN in both sample efficiency and final return. In contrast, PE-DQN
maintains strong performance across all obstacle counts, indicating robustness
to localized symmetry-breaking and aligning with our theory that value errors
remain controlled by $\epsilon_R$ and $\epsilon_P$ when the gate routes away
from equivariance in mismatched regions.

Fig.~\ref{additional_exp} evaluates Grid-World variants that isolate reward-level symmetry-breaking and complex dynamics. For reward-level symmetry-breaking, we augment each predictor with a reward head $\hat R_i(s,a),~i\in\{N,E\}$ and define disagreement as the sum of transition- and reward-level terms (Appendix~\ref{app:lambda-details}). This setting converts a subset of obstacles into passable but penalized cells, so symmetry is broken purely through the reward while the transition structure remains unchanged. Across the 10- and 30-obstacle layouts, PE-DQN consistently achieves the highest returns, with RPP-DQN as the strongest baseline but still trailing in both sample efficiency and final performance.

In the complex-dynamics variant with 40 obstacles and stochastic transitions, random slips partially mask transition-level symmetry-breaking (e.g., the agent can occasionally bypass obstacles by chance), which allows strictly equivariant and Approximately Equivariant DQN to recover reasonable performance. Nonetheless, PE-DQN continues to attain the best returns and learning speed, indicating that the disagreement-based gate remains effective even when the underlying dynamics are noisy and harder to model.

In Fig.~\ref{exp_continuous}, we report results on continuous-control
locomotion and manipulation tasks. In Hopper, PE-SAC learns faster than all
baselines and reaches a strong plateau, although vanilla SAC attains a slightly
higher asymptotic return. In Ant, PE-SAC clearly dominates in both sample
efficiency and final performance. In Swimmer, where symmetry is nearly exact,
the strictly equivariant and Approximately Equivariant SAC baselines achieve
the highest final returns, while PE-SAC converges quickly to a slightly lower
but competitive level, reflecting the advantage of enforcing exact symmetry in
this regime. In Fetch Reach, PE-SAC, exact equivariant SAC, and Approximately Equivariant SAC perform similarly. In UR5e Reach,
where symmetry-breaking is substantial due to realistic dynamics and free
or  ientation, the strictly equivariant and Approximately Equivariant SAC
variants become unstable or collapse, whereas PE-SAC remains stable and attains
the best overall returns by shifting toward the non-equivariant head.

Overall, these results support our central claim: by selectively mitigating local equivariance errors, PE-DQN and PE-SAC retain the sample efficiency benefits of equivariance in symmetric regions while remaining robust in symmetry-broken regimes across discrete, continuous, and realistic robotic environments.

\section{Conclusion}
In this work, we introduced the \textbf{PI-MDP}, a framework that mitigates global
error propagation from local symmetry-breaking. Building on this foundation, we developed
\textbf{Partially Equivariant RL (PE-RL)} algorithms—PE-DQN for discrete control and PE-SAC for
continuous control—that consistently improved sample efficiency and robustness over conventional
RL, exact-equivariant methods, and approximate baselines.

The main limitation is computation: auxiliary predictors and gates increase training time. 
Furthermore, under pervasive symmetry-breaking (e.g., gravity), the framework largely defaults to non-equivariant networks, offering marginal benefits over standard RL. However, since symmetry-breaking in typical control tasks is localized, our approach successfully preserves both sample efficiency and robustness to symmetry-breaking.

Future work includes extending PE-RL to vision-based control, advancing the practicality of symmetry-aware reinforcement learning for real-world continuous
control.

\subsubsection*{Acknowledgments}
We would like to thank Hyunwoo Ryu for insightful discussions that helped improve this manuscript. This work was supported by the National Research Foundation of Korea (NRF) grant funded by the Korea government (MSIT) (No.RS-2024-00344732). This work was also supported by the Korea Institute of Science and Technology (KIST) Institutional Program (Project No.2E33801-25-015), the Institute of Information \& communications Technology Planning \& Evaluation (IITP) grant funded by the Korea government (MSIT) (No. RS-2024-00457882, AI Research Hub Project), and the Institute of Information \& Communications Technology Planning \& Evaluation (IITP) grant (RS-2020-II201361, Artificial Intelligence Graduate School Program (Yonsei University)). Joohwan Seo and Roberto Horowitz are funded by the Hong Kong Center for Construction Robotics Limited (HKCRC).

\bibliography{iclr2026_conference}
\bibliographystyle{iclr2026_conference}

\newpage
\appendix

\section{Theoretical Proofs}

\subsection{Proof of Lemma~\ref{lem:one-step} and Proposition~\ref{prop}}
\label{problem}

\begin{lemmarestate}[One-step Bellman error]
For any bounded $Q$ and any $(s,a)\in\mathcal S\times \mathcal A$, 
\[
    \big| (\mathcal{T}_N Q)(s,a) - (\mathcal{T}_E Q)(s,a) \big|
    \;\le\; \epsilon_R(s,a) \;+\; 2\gamma \, \|V_Q\|_\infty \, \epsilon_P(s,a).
\]
\end{lemmarestate}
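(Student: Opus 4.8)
The plan is to evaluate both operators at an arbitrary fixed pair $(s,a)$, subtract them, and apply the triangle inequality to separate a reward contribution from a transition contribution. Writing $V_Q(s'):=\max_{a'}Q(s',a')$, we have $(\mathcal{T}_i Q)(s,a)=R_i(s,a)+\gamma\int_{\mathcal{S}}V_Q(s')\,P_i(s'\mid s,a)\,ds'$ for $i\in\{N,E\}$, so
\[
(\mathcal{T}_N Q)(s,a)-(\mathcal{T}_E Q)(s,a)=\big(R_N(s,a)-R_E(s,a)\big)+\gamma\int_{\mathcal{S}}V_Q(s')\,\big(P_N(s'\mid s,a)-P_E(s'\mid s,a)\big)\,ds'.
\]
Taking absolute values and using the triangle inequality bounds $\big|(\mathcal{T}_N Q)(s,a)-(\mathcal{T}_E Q)(s,a)\big|$ by $\epsilon_R(s,a)$ (immediate from the definition of $\epsilon_R$) plus $\gamma$ times the modulus of the integral term.

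For the transition term I would factor out $V_Q$ in sup-norm: $\big|\int_{\mathcal{S}}V_Q(s')\,(P_N-P_E)\,ds'\big|\le\|V_Q\|_\infty\int_{\mathcal{S}}|P_N(s'\mid s,a)-P_E(s'\mid s,a)|\,ds'$, and the remaining $L^1$ integral equals $2\epsilon_P(s,a)$ by the definition of $\epsilon_P$ in \eqnref{eq:epsilons} (the total-variation distance carries the factor $\tfrac12$). Combining the two pieces gives exactly $\epsilon_R(s,a)+2\gamma\|V_Q\|_\infty\,\epsilon_P(s,a)$, as claimed. The auxiliary remarks are routine: $\|V_Q\|_\infty\le\|Q\|_\infty$ since a maximum over $a'$ cannot exceed the sup-norm, and when $Q$ is an action--value function of a $\gamma$-discounted MDP with $|R_i|\le R_{\max}$ one has $\|Q\|_\infty\le R_{\max}/(1-\gamma)=V_{\max}$ by the geometric-series bound on discounted returns.

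There is no genuinely hard step here; the points that require care are (i) keeping the normalization of $\epsilon_P$ straight, which is why the transition term picks up the factor $2$, and (ii) keeping the estimate \emph{pointwise} in $(s,a)$ rather than immediately passing to a supremum, since it is this pointwise form that Proposition~\ref{prop} feeds into the contraction argument. A sharper transition bound $(\sup V_Q-\inf V_Q)\,\epsilon_P$ is available by recentering $V_Q$ using $\int_{\mathcal{S}}(P_N-P_E)\,ds'=0$, but the coarser estimate above already suffices, so I would not pursue it.
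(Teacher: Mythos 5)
Your argument is correct and follows essentially the same route as the paper's proof: split the difference into reward and transition parts by the triangle inequality, then bound the transition term by $2\gamma\|V_Q\|_\infty\,\epsilon_P(s,a)$, which is exactly the total-variation inequality the paper invokes (you just derive it inline by factoring out $\|V_Q\|_\infty$ from the $L^1$ integral). No gaps; your side remarks on the normalization of $\epsilon_P$ and the sharper recentering bound are fine but not needed.
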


\begin{proof}
By the triangle inequality,
\begin{align*}
&\big|(\mathcal T_N Q)(s,a) - (\mathcal T_E Q)(s,a)\big| \\
&\;\;= \Big| R_N(s,a) - R_E(s,a)
+ \gamma\big( \mathbb E_{s'\sim P_N(\cdot\mid s,a)} [V_Q(s')] - \mathbb E_{s'\sim P_E(\cdot\mid s,a)} [V_Q(s')] \big) \Big| \\
&\;\;\le \; \epsilon_R(s,a)
+ \gamma \,\Big| \mathbb E_{P_N} [V_Q] - \mathbb E_{P_E} [V_Q] \Big|.
\end{align*}
Using the total-variation inequality $\big|\mathbb E_{P} [f] - \mathbb E_{Q} [f]\big|\le 2\|f\|_\infty\,\mathrm{TV}(P,Q)$ with the definition of $\epsilon_P$ in Eq.~\eqnref{eq:epsilons} and $f=V_Q$,
\[
\big| \mathbb E_{P_N} [V_Q] - \mathbb E_{P_E} [V_Q] \big|
\le 2 \|V_Q\|_\infty \, \epsilon_P(s,a).
\]
Combining the bounds leads to the lemma.
\end{proof}

\begin{proprestate} (Value-function gap).
Let $Q^*_i $ be the optimal action--value function in MDP $i$. Then,
\[
    \|Q^*_N-Q^*_E\|_\infty \le \frac{1}{1-\gamma} \|\delta\|_\infty.
\]
\end{proprestate}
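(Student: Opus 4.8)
## Proof Plan

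The plan is to derive the value-function gap from the one-step Bellman error of Lemma~\ref{lem:one-step} by exploiting the fact that both $Q^*_N$ and $Q^*_E$ are fixed points of their respective $\gamma$-contractive Bellman optimality operators. The standard trick is to insert a cross-term and use the triangle inequality: write $Q^*_N - Q^*_E = \mathcal T_N Q^*_N - \mathcal T_E Q^*_E$ and split this as $(\mathcal T_N Q^*_N - \mathcal T_E Q^*_N) + (\mathcal T_E Q^*_N - \mathcal T_E Q^*_E)$. The first difference is a one-step operator mismatch applied to the \emph{same} bounded action-value function $Q^*_N$, so Lemma~\ref{lem:one-step} bounds it pointwise by $\delta(s,a)$ (using that $\|V_{Q^*_N}\|_\infty \le V_{\max}$, since $Q^*_N$ is a genuine action-value function with rewards bounded by $R_{\max}$). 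The second difference is the \emph{same} operator $\mathcal T_E$ applied to two different arguments, hence bounded in sup-norm by $\gamma\|Q^*_N - Q^*_E\|_\infty$ by the contraction property.

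First I would state explicitly that $\mathcal T_N Q^*_N = Q^*_N$ and $\mathcal T_E Q^*_E = Q^*_E$, and recall that $\|Q^*_N\|_\infty \le V_{\max}$ so that the hypothesis of Lemma~\ref{lem:one-step} applies with $\|V_{Q^*_N}\|_\infty \le V_{\max}$. Then, taking sup-norms over the decomposition above,
\[
\|Q^*_N - Q^*_E\|_\infty
\;\le\; \sup_{s,a}\bigl|(\mathcal T_N Q^*_N)(s,a) - (\mathcal T_E Q^*_N)(s,a)\bigr|
\;+\; \gamma\,\|Q^*_N - Q^*_E\|_\infty
\;\le\; \sup_{s,a}\delta(s,a) \;+\; \gamma\,\|Q^*_N - Q^*_E\|_\infty.
\]
Rearranging, $(1-\gamma)\|Q^*_N - Q^*_E\|_\infty \le \sup_{s,a}\delta(s,a)$, and dividing by $1-\gamma > 0$ gives the claimed bound. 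This is essentially a perturbation-of-fixed-points argument specialized to the Bellman optimality operator.

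The one subtlety worth being careful about — and the only place the argument could go wrong — is ensuring that $Q^*_N$ is bounded so that Lemma~\ref{lem:one-step} is applicable to it: this follows because under $|R_N|\le R_{\max}$ and $\gamma\in(0,1)$ the operator $\mathcal T_N$ is a $\gamma$-contraction on bounded functions, its unique fixed point satisfies $\|Q^*_N\|_\infty \le V_{\max}$, and hence $\|V_{Q^*_N}\|_\infty \le \|Q^*_N\|_\infty \le V_{\max}$. A secondary point is that the contraction inequality $\|\mathcal T_E f - \mathcal T_E g\|_\infty \le \gamma\|f-g\|_\infty$ should be invoked (or briefly justified via the standard max-operator nonexpansiveness argument). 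Neither of these is a genuine obstacle; the proof is short and the "hard part" is merely bookkeeping the cross-term decomposition correctly so that each piece is handled by exactly one tool (Lemma~\ref{lem:one-step} for the operator mismatch, contraction for the argument mismatch).
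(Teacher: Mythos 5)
Your proof is correct and follows essentially the same fixed-point perturbation argument as the paper: insert a cross-term, bound one piece by the one-step error of Lemma~\ref{lem:one-step} and the other by $\gamma$-contraction, then rearrange. The only (immaterial) difference is the choice of cross-term --- you use $\mathcal T_E Q^*_N$ and apply the lemma to $Q^*_N$, while the paper uses $\mathcal T_N Q^*_E$ and applies the lemma to $Q^*_E$; both are valid since each optimal action--value function is bounded by $V_{\max}$.
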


\begin{proof}
Since $Q_N^* = \mathcal T_N Q_N^*$ and $Q^*_E=\mathcal T_E Q_E^*$, we have
\[
\|Q_N^* - Q^*_E\|_\infty
= \|\mathcal T_N Q_N^* - \mathcal T_E Q_E^*\|_\infty
\le \|\mathcal T_NQ_N^* - \mathcal T_N Q_E^*\|_\infty + \|\mathcal T_N Q_E^* - \mathcal T_E Q_E^*\|_\infty.
\]

The Bellman optimality operator is a $\gamma$-contraction in the sup norm, so
\[
\|\mathcal T_N Q_N^* - \mathcal T_N Q_E^*\|_\infty  \le \gamma\|Q_N^* - Q_E^*\|_\infty.
\]
By Lemma~\ref{lem:one-step} applied with $Q=Q_E^*$ and the bounded $\|V_{Q_E^*}\|_\infty \le V_{\max}$, we have
\[
\|\mathcal T_N Q_E^* - \mathcal T_EQ_E^*\|_\infty
\le \|\delta\|_\infty,
\]
where $\delta(s,a)$ represents the pointwise one-step approximation error defined in Eq.~\eqnref{delta}. Combining the two inequalities gives
\[
\|Q_N^* - Q_E^*\|_\infty \le \gamma \|Q_N^* - Q_E^*\|_\infty + \|\delta \|_\infty.
\]
Rearranging results in 
\[
\|Q_N^* - Q_E^*\|_\infty \le\frac{1}{1-\gamma} \|\delta\|_\infty,
\]
which completes the proof.
\end{proof}

\subsection{Proof of Theorem 1 and Corollary 1}
\label{app:pi-bellman-proof}
\begin{theoremrestate}[Partially group-invariant optimality operator]
Let $\mathcal T_i$ denote the (hard) Bellman optimality operator in MDP $i\in\{E,N,H\}$, $(\mathcal T_i Q)(s,a)=R_i(s,a) +\gamma\,\mathbb E_{s'\sim P_i(\cdot\mid s,a)}\!\left[\max_{a'} Q(s',a')\right].$
For any bounded $Q:\mathcal S\times\mathcal A\to\mathbb R$ and all $(s,a)$,
\begin{equation}\label{eq:affinity2}
(\mathcal T_H Q)(s,a)
=(1-\lambda(s,a))\,(\mathcal T_E Q)(s,a)+\lambda(s,a)\,(\mathcal T_N Q)(s,a).
\end{equation}
If $|R_E|,|R_N|\le R_{\max}$ and $\gamma\in(0,1)$, then $\mathcal T_H$ is a $\gamma$-contraction and admits a unique fixed point $Q_H^*$.
\end{theoremrestate}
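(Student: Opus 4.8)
The plan is to prove the two assertions in order: first the affinity identity \eqnref{eq:affinity2}, which is immediate from the convex-combination definitions of $R_H$ and $P_H$, and then the contraction property, which follows quickly once the identity is available.

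For \eqnref{eq:affinity2}, I would expand $(\mathcal T_H Q)(s,a)$ directly from Definition~\ref{def:pimdp}. Writing $V_Q(s')=\max_{a'}Q(s',a')$, we have
\[
(\mathcal T_H Q)(s,a)=R_H(s,a)+\gamma\int_{\mathcal S}\Big[(1-\lambda(s,a))\,P_E(s'\mid s,a)+\lambda(s,a)\,P_N(s'\mid s,a)\Big]V_Q(s')\,ds'.
\]
Since $Q$ is bounded, $V_Q$ is bounded (and measurable, so the integral is well defined), hence $\int_{\mathcal S}P_E(s'\mid s,a)V_Q(s')\,ds'$ and $\int_{\mathcal S}P_N(s'\mid s,a)V_Q(s')\,ds'$ are both finite; linearity of the integral then lets me split the bracket, and the $R_H$ term splits the same way. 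Grouping the $(1-\lambda(s,a))$-weighted and $\lambda(s,a)$-weighted pieces yields exactly $(\mathcal T_E Q)(s,a)$ and $(\mathcal T_N Q)(s,a)$, proving the identity. The one point that needs a word of care is measurability: because $\lambda$ is assumed measurable and $P_E,P_N$ are transition kernels, $P_H$ is again a valid kernel and $(s,a)\mapsto(\mathcal T_H Q)(s,a)$ is measurable, so $\mathcal T_H$ maps bounded (measurable) functions to bounded (measurable) functions; together with $|R_H|\le R_{\max}$ this also gives $\|\mathcal T_H Q\|_\infty\le R_{\max}+\gamma\|Q\|_\infty<\infty$.

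For the contraction, I would combine \eqnref{eq:affinity2} with the standard fact (already used above in the proof of Proposition~\ref{prop}) that $\mathcal T_E$ and $\mathcal T_N$ are each $\gamma$-contractions in $\|\cdot\|_\infty$. For bounded $Q_1,Q_2$ and any $(s,a)$, applying the triangle inequality to the affinity identity gives
\[
\big|(\mathcal T_H Q_1)(s,a)-(\mathcal T_H Q_2)(s,a)\big|\le(1-\lambda(s,a))\,\gamma\|Q_1-Q_2\|_\infty+\lambda(s,a)\,\gamma\|Q_1-Q_2\|_\infty=\gamma\|Q_1-Q_2\|_\infty,
\]
where the last step uses that the two weights sum to one. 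Taking the supremum over $(s,a)$ yields $\|\mathcal T_H Q_1-\mathcal T_H Q_2\|_\infty\le\gamma\|Q_1-Q_2\|_\infty$. Since the space of bounded measurable functions $\mathcal S\times\mathcal A\to\mathbb R$ with the sup norm is complete and $\mathcal T_H$ is a self-map of it (by the boundedness remark above), the Banach fixed-point theorem gives a unique fixed point $Q_H^*$.

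I do not anticipate a real obstacle here — the argument is essentially bookkeeping. The closest thing to a subtlety is staying inside the complete normed space of bounded measurable functions throughout: measurability of $\lambda$ is exactly what makes $P_H$ a legitimate kernel and $\mathcal T_H$ a self-map, and boundedness of $Q$ is what licenses splitting the integral in the affinity identity. Everything else — the contraction modulus $\gamma$ and the existence and uniqueness of $Q_H^*$ — is immediate from the convexity of the mixture and the known contraction of $\mathcal T_E$ and $\mathcal T_N$.
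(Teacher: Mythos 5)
Your proposal is correct and follows essentially the same route as the paper's proof: expand $(\mathcal T_H Q)(s,a)$ from the definition of $(R_H,P_H)$ and use linearity to obtain the affinity identity, then combine convexity of the $\lambda$-weights with the $\gamma$-contraction of $\mathcal T_E$ and $\mathcal T_N$ and invoke Banach's fixed-point theorem. Your added remarks on measurability of $\lambda$ and completeness of the space of bounded functions are a minor (and welcome) tightening of the same argument, not a different approach.
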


\begin{proof}

\emph{Identity Eq.~\eqnref{eq:affinity2}.}
By Definition ~\ref{def:pimdp}, for any $(s,a)$,
\begin{align*}
(\mathcal T_H Q)(s,a)
&= (1-\lambda(s,a))\Big(R_E(s,a) + \gamma\,\mathbb E_{s'\sim P_E(\cdot\mid s,a)}\!\big[\,\max_{a'} Q(s',a')\,\big]\Big) \\
&+ \lambda(s,a)\Big(R_N(s,a) + \gamma\,\mathbb E_{s'\sim P_N(\cdot\mid s,a)}\!\big[\,\max_{a'} Q(s',a')\,\big]\Big),
\end{align*}
which equals $(1-\lambda)\mathcal T_E Q+ \lambda \mathcal T_N Q$ pointwise.

\emph{Contraction.}
Let $Q_1, Q_2$ be bounded. Using Eq.~\eqnref{eq:affinity} and that $\mathcal T_E, \mathcal T_N$ are $\gamma$-contractions,
\begin{align*}
&\big|\mathcal T_H Q_1(s,a)-\mathcal T_H Q_2(s,a)\big| \\
&= \Big|(1-\lambda(s,a))\big(\mathcal T_E Q_1(s,a)-\mathcal T_E Q_2(s,a)\big)
+ \lambda(s,a)\big(\mathcal T_N Q_1(s,a)-\mathcal T_N Q_2(s,a)\big)\Big| \\
&\le (1-\lambda(s,a))\,\|\mathcal T_E Q_1 - \mathcal T_E Q_2\|_\infty
+ \lambda(s,a)\,\|\mathcal T_N Q_1 - \mathcal T_N Q_2\|_\infty \\
&\le \gamma\,\|Q_1-Q_2\|_\infty.
\end{align*}
Taking the supremum over $(s,a)$ gives
\[
\|\mathcal T_H Q_1 - \mathcal T_H Q_2\|_\infty \le \gamma\,\|Q_1-Q_2\|_\infty.
\]
Bounded rewards ensure $\mathcal T_H$ maps bounded $Q$ into bounded $Q$.
By Banach's fixed point theorem, $\mathcal T_H$ has a unique fixed point $Q_H^*$.
\end{proof}

\begin{corollaryrestate}[Proximity bound]
% Let $Q_N^*$ be the optimal action--value of the true MDP $\mathcal M_N$, and let $V_N(s)=\max_{a}Q_N^*(s,a)$.
% Then
% \begin{equation}\label{eq:proximity2}
% \|Q_H^*-Q_N^*\|_\infty
% \;\le\;\frac{1}{1-\gamma}\;
% \Big\| (1-\lambda)\,\big[ \epsilon_R(s,a) + 2\gamma \epsilon_P(s,a)V_{\max} \big] \Big\|_\infty.
% \end{equation}
Let $Q_N^*$ be the optimal action--value of the true MDP $\mathcal M_N$,
% and let $V_N(s)=\max_{a}Q_N^*(s,a)$.
and let $\delta(s,a)$ be the one-step pointwise Bellman error bound.
Then
\begin{equation}\label{eq:proximity2}
\|Q_H^*-Q_N^*\|_\infty
\;\le\;\frac{1}{1-\gamma}\;
\Big\| (1-\lambda)\,
\delta
\Big\|_\infty.
\end{equation}
\end{corollaryrestate}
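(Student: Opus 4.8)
The plan is to mimic the contraction argument used in the proof of Proposition~\ref{prop}, but now comparing the fixed point $Q_H^*$ of $\mathcal T_H$ against the fixed point $Q_N^*$ of $\mathcal T_N$. Since $Q_H^* = \mathcal T_H Q_H^*$ and $Q_N^* = \mathcal T_N Q_N^*$, I would start with
\[
\|Q_H^* - Q_N^*\|_\infty = \|\mathcal T_H Q_H^* - \mathcal T_N Q_N^*\|_\infty
\le \|\mathcal T_H Q_H^* - \mathcal T_H Q_N^*\|_\infty + \|\mathcal T_H Q_N^* - \mathcal T_N Q_N^*\|_\infty.
\]
The first term is bounded by $\gamma\|Q_H^* - Q_N^*\|_\infty$ because $\mathcal T_H$ is a $\gamma$-contraction by Theorem~\ref{pibellman}. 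So everything reduces to controlling the second term pointwise and then rearranging.

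For the second term, I would invoke the affinity identity Eq.~\eqnref{eq:affinity} from Theorem~\ref{pibellman}: $(\mathcal T_H Q_N^*)(s,a) = (1-\lambda(s,a))(\mathcal T_E Q_N^*)(s,a) + \lambda(s,a)(\mathcal T_N Q_N^*)(s,a)$. Subtracting $(\mathcal T_N Q_N^*)(s,a)$ from both sides, the $\lambda$-weighted term cancels and we get
\[
(\mathcal T_H Q_N^*)(s,a) - (\mathcal T_N Q_N^*)(s,a) = (1-\lambda(s,a))\big[(\mathcal T_E Q_N^*)(s,a) - (\mathcal T_N Q_N^*)(s,a)\big].
\]
Now apply Lemma~\ref{lem:one-step} with $Q = Q_N^*$: since $Q_N^*$ is an action-value function, $\|V_{Q_N^*}\|_\infty \le V_{\max}$, so $|(\mathcal T_E Q_N^*)(s,a) - (\mathcal T_N Q_N^*)(s,a)| \le \epsilon_R(s,a) + 2\gamma V_{\max}\epsilon_P(s,a) = \delta(s,a)$. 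Hence the pointwise bound is $(1-\lambda(s,a))\,\delta(s,a)$, and taking the sup over $(s,a)$ gives $\|\mathcal T_H Q_N^* - \mathcal T_N Q_N^*\|_\infty \le \|(1-\lambda)\,\delta\|_\infty$.

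Combining, $\|Q_H^* - Q_N^*\|_\infty \le \gamma\|Q_H^* - Q_N^*\|_\infty + \|(1-\lambda)\,\delta\|_\infty$, and rearranging (using $\gamma < 1$ so the subtraction is valid and the left side is finite since both fixed points are bounded) yields the claimed bound $\|Q_H^* - Q_N^*\|_\infty \le \frac{1}{1-\gamma}\|(1-\lambda)\,\delta\|_\infty$. For the final remark, I would note that $(1-\lambda(s,a))\,\delta(s,a) = 0$ at each $(s,a)$ exactly when $\lambda(s,a) = 1$ or $\delta(s,a) = 0$; the latter holds when $\epsilon_R(s,a) = \epsilon_P(s,a) = 0$, i.e., when $R_E = R_N$ and $P_E(\cdot\mid s,a) = P_N(\cdot\mid s,a)$ there. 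I do not anticipate a serious obstacle — this is a routine contraction-plus-triangle-inequality argument — the only mild subtlety is being careful that the cancellation isolates exactly the $(1-\lambda)$ factor (rather than, say, requiring a bound on the $\mathcal T_N$ vs.\ $\mathcal T_N$ difference), which is why picking the expansion point to be $Q_N^*$ rather than $Q_E^*$ or $Q_H^*$ matters.
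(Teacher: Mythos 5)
Your proof is correct and follows essentially the same route as the paper's: decompose via $\|\mathcal T_H Q_H^*-\mathcal T_H Q_N^*\|_\infty+\|\mathcal T_H Q_N^*-\mathcal T_N Q_N^*\|_\infty$, use the $\gamma$-contraction of $\mathcal T_H$, apply the affinity identity to isolate the $(1-\lambda)$ factor, bound the residual pointwise by $\delta(s,a)$ (the paper re-derives this bound inline, you invoke Lemma~\ref{lem:one-step} directly, which is equivalent), and rearrange. No gaps; your choice of $Q_N^*$ as the expansion point is exactly what the paper does.
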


\begin{proof} 
\begin{align*}
\|Q_H^*-Q_N^*\|_\infty 
&=\|\mathcal T_H Q_H^*-\mathcal T_N Q_N^*\|_\infty\\
&\le \|\mathcal T_H Q_H^*-\mathcal T_H Q_N^*\|_\infty + \|\mathcal T_H Q_N^*-\mathcal T_N Q_N^*\|_\infty\\
&\le \gamma\|Q_H^*-Q_N^*\|_\infty + \|(1-\lambda)\,(\mathcal T_E Q^*_N-\mathcal T_NQ_N^*)\|_\infty.
\end{align*}
Expanding pointwise,
\[
(\mathcal T_E Q_N^* - \mathcal T_N Q_N^*)(s,a)
= (R_E - R_N)(s,a)
+ \gamma\Big(\mathbb E_{s'\sim P_E(\cdot\mid s,a)}[\,V_N(s')\,]
           - \mathbb E_{s'\sim P_N(\cdot\mid s,a)}[\,V_N(s')\,]\Big).
\]
By the definition of total variation distance,
\[
\big|\mathbb E_{P_E}[V_N] - \mathbb E_{P_N}[V_N]\big|
\;\le\; 2\,\epsilon_P(s,a)\,V_{\max},
\]
where $V_{\max} = R_{\max}/(1-\gamma)$, $\epsilon_P(s,a) = \frac{1}{2}\int_\mathcal S |P_N(s'|s,a) - P_E(s'|s,a)|\ ds'$ introduced in Eq.~\eqnref{eq:epsilons}, and
$\delta(s,a)$ defined in Eq.~\eqnref{delta}. 
Rearranging gives Eq.~\eqnref{eq:proximity2}
\end{proof}

\subsection{Partial Group-Invariance in Soft MDPs}\label{app:pimdp-soft}
Since the PI-MDP defined in Definition~\ref{def:pimdp} is a valid MDP, the soft policy iteration framework \citep{haarnoja2018soft} applies unchanged. We show the evaluation identity and the standard improvement step for completeness.

\paragraph{Policy evaluation.}
For a fixed policy $\pi$, define the soft state value $V_Q^\pi(S):=\mathbb E_{a\sim\pi(\cdot\mid s)}\!\big[\,Q(s,a)-\alpha \log \pi(a\mid s)\, \big]$ with temperature $\alpha >0$.
The soft Bellman operator under $\mathcal M_H$ is
\[
(\mathcal T_H^\pi Q)(s,a)=R_H(s,a)+\gamma\,\mathbb E_{s'\sim P_H(\cdot\mid s,a)}\!\big[V_Q^\pi(s')\big].
\]
Writing $\lambda:=\lambda(s,a)$ for brevity, $R_H$ and $P_H$ (Definition \ref{def:pimdp}) leads to the pointwise identity
\begin{align*}
(\mathcal T_H^\pi Q)(s,a)
&= (1-\lambda)\,R_E(s,a) + \lambda\,R_N(s,a) \\
&\quad + \gamma\Big( (1-\lambda)\,\mathbb E_{s'\sim P_E(\cdot\mid s,a)}\!\big[ V_Q^\pi(s') \big]
                 + \lambda\,\mathbb E_{s'\sim P_N(\cdot\mid s,a)}\!\big[ V_Q^\pi(s') \big] \Big) \\
&= (1-\lambda)\,(\mathcal T_E^\pi Q)(s,a)
  + \lambda\,(\mathcal T_N^\pi Q)(s,a).
\end{align*}
Thus, soft evaluation under $\mathcal M_H$ is the same convex combination of the component evaluation as in hard (max) case.

\paragraph{Policy improvement.}
Treating $\lambda$ as fixed, the soft policy improvement step follows the SAC formulation:
\begin{equation}\label{policyimprovement}
\pi_{k+1}(\cdot\mid s)
=\arg\min_{\pi}\; D_{\mathrm{KL}}\!\left(
\pi(\cdot\mid s)\,\bigg\|\,\frac{\exp\!\big(Q^{\pi_k}(s,\cdot)/\alpha\big)}{Z_k(s)}
\right),
\end{equation}
where $Z_k(s)$ is the normalizing constant. Alternating evaluation under $(\mathcal T_H^\pi)$ and the update Eq.~\eqnref{policyimprovement} is exactly soft policy iteration on $\mathcal M_H$. Under the standard assumptions of \citet{haarnoja2018soft}, this admits a unique soft fixed point and corresponding policy.

\subsection{Policy parameterization and tractability for PE-SAC}\label{pihproof}

\paragraph{PoE from SAC policy improvement.}
For a fixed gating function $\lambda:\mathcal S\times \mathcal A\to[0,1]$ and $Q_\theta=(1-\lambda)Q_E + \lambda Q_N$, the SAC information projection (for each $s$)
\[
\pi^*(\cdot\mid s)
= \arg\min_{\pi} D_{\mathrm{KL}}\!\left(
\pi(\cdot\mid s)\,\bigg\|\,\frac{\exp\!\big(Q_\theta(s,\cdot)/\alpha\big)}{Z_\theta(s)}
\right)
\]
has a unique solution
\begin{align*}
\pi^*(a\mid s)\;&\propto\;\exp\!\Big(\tfrac{(1-\lambda)Q_E(s,a)+\lambda Q_N(s,a)}{\alpha}\Big) \\
&= \big[\exp\!\big(Q_E(s,a)/\alpha\big)\big]^{\,1-\lambda(s,a)}\,
  \big[\exp\!\big(Q_N(s,a)/\alpha\big)\big]^{\,\lambda(s,a)}.
\end{align*}

If $\lambda$ is state-only, $\lambda=\lambda(s)$, then the normalizers of $\exp(Q_E/\alpha)$ and $\exp(Q_N/\alpha)$ are constant in $a$ and factor out, leading to the geometric mixture of normalized policies:
\[
\pi^*(\cdot\mid s)\;\propto\;\pi_E(\cdot\mid s)^{\,1-\lambda(s)}\,\pi_N(\cdot\mid s)^{\,\lambda(s)}
\]
where 
\[
\pi_E(\cdot\mid s)\propto \exp\!\big(Q_E(s,\cdot)/\alpha\big),\;\;
\pi_N(\cdot\mid s)\propto \exp\!\big(Q_N(s,\cdot)/\alpha\big).
\]

\paragraph{Why an action-dependent gating function breaks reparameterization.}
Write the energies $f_E:=Q_E/\alpha$ and $f_N:=Q_N/\alpha$. Define the unnormalized density
\[
u_\phi(a\mid s)\;:=\;\exp\!\big\{(1-\lambda(s,a))\,f_E(s,a)+\lambda(s,a)\,f_N(s,a)\big\},\qquad
Z_\phi(s):=\int_{\mathcal A} u_\phi(a\mid s)\,da.
\]
When $\lambda=\lambda(s,a)$, the normalizer $Z_\phi(s)$ has no closed form and its gradient with respect to the parameters inside $\lambda, f_E, f_N$ is intractable. Therefore,
\[
\log \pi_\phi(a\mid s) \;=\; (1-\lambda)f_E(s,a)+\lambda f_N(s,a)\;-\;\log Z_\phi(s)
\]
cannot be evaluated with a tractable pathwise sampler $a=g_\phi(s,\epsilon)$, so the reparameterized SAC actor objective
\[
J(\phi)\;=\;\mathbb E_{s,\epsilon}\big[\alpha\,\log \pi_\phi(a\mid s)-Q_\theta(s,a)\big]
\]
is not tractable. 
This motivates a \emph{state-only} gating function in the actor.

\paragraph{Gaussian policy with squashing (state-only gating).}
Following SAC, we use an unbounded Gaussian for a pre-squash variable $u\in\mathbb R^D$ and apply an elementwise $\tanh$ to obtain bounded actions $a=\tanh(u)$. Let the two pre-squash Gaussian \emph{densities} be
\[
p_E(u\mid s)=\mathcal N\!\big(u;\,\mu_E(s),\Sigma_E(s)\big),\qquad
p_N(u\mid s)=\mathcal N\!\big(u;\,\mu_N(s),\Sigma_N(s)\big),
\]
and let the gating function be state-only, $\lambda=\lambda(s)\in[0,1]$.
Define the unnormalized product
\[
\tilde p_H(u\mid s)\;:=\; p_E(u\mid s)^{\,1-\lambda(s)}\,p_N(u\mid s)^{\,\lambda(s)}.
\]
Since the exponents are constants for fixed $s$, $\tilde p_H$ is proportional to a Gaussian. In particular,
\begin{align}
p_H(u\mid s)
&= \mathcal N\!\big(u;\,\mu_H(s),\Sigma_H(s)\big), \nonumber\\
\Sigma^{-1}_H(s)
&=(1-\lambda(s))\,\Sigma^{-1}_E(s)+\lambda(s)\,\Sigma^{-1}_N(s), \label{eq:poe-gaussian}\\
\mu_H(s)
&=\Sigma_H(s)\Big((1-\lambda(s))\,\Sigma_E^{-1}(s)\mu_E(s)+\lambda(s)\,\Sigma^{-1}_N(s)\mu_N(s)\Big). \nonumber
\end{align}
With $a=\tanh(u)$ and the change-of-variables formula (cf. SAC\citep{haarnoja2018soft}, Eqs. (20)–(21)),
\begin{align*}
&\pi_H(a\mid s)\;=\;p_H(u\mid s)\,\Big|\det\!\Big(\frac{\partial a}{\partial u}\Big)\Big|^{-1}\\
&\log \pi_H(a\mid s)\;=\;\log p_H(u\mid s)\;-\;\sum_{i=1}^D \log\!\big(1-\tanh^2(u_i)\big),
\end{align*}
where $u=\operatorname{arctanh}(a)$ and the Jacobian $\partial a/\partial u$ is diagonal with entries $1-\tanh^2(u_i)$. When the gating is binary, $\lambda(s)\in\{0,1\}$, Eq.~\eqnref{eq:poe-gaussian} reduces to the corresponding expert.

\section{Implementation Details}\label{implementation_detail}

\subsection{Details for learning $\lambda_\omega$ and predictors $\hat P_E, \hat P_N$}
\label{app:lambda-details}
\paragraph{One-step predictors and disagreement metric.}
We train two one-step predictors on replay. In the environment with discrete state spaces (e.g., the grid-world with obstacles), the predictors parameterize the transition kernel $\hat P_i(s' \mid s,a)$ for $i\in\{E,N\}$, implemented as a categorical distribution over the possible next states. In environments with continuous state spaces, the predictors output the increment to the next state,
$\hat P_i:(s,a)\mapsto \Delta\hat s_i(s,a)$,
intended to approximate the transition dynamics of $\mathcal M_E$ and $\mathcal M_N$, respectively. 

In the discrete case, each predictor is trained via a cross-entropy loss on transitions $(s,a,s')$. 
\[
\mathcal L_{\text{pred}}^{(i)}
\;=\;
\mathbb{E}_{(s,a,s')\sim\mathcal D}
\big[ - \log \hat P_i(s' \mid s,a) \big],
\qquad i\in\{E,N\}.
\]
The discrepancy measure $D(\hat P_E, \hat P_N)$ is defined as the total-variation distance, consistent with the definition of $\epsilon_P(s,a)$ in Eq.~\eqnref{eq:epsilons}:
\[
d(s,a)
\;=\;
\frac{1}{2}\sum_{s'\in\mathcal S}
\big|\hat P_N(s'\mid s,a) - \hat P_E(s'\mid s,a)\big|.
\]

In the continuous case, each predictor is optimized by minimizing mean squared error on the state increment $\Delta s:=s'-s$:
\[
\mathcal L_{\text{pred}}^{(i)}
\;=\;
\mathbb{E}_{(s,a,s')\sim\mathcal D}
\big[\big\|\Delta\hat s_i(s,a) - \Delta s\big\|_2^2\big],
\qquad i\in\{E,N\}.
\]
The disagreement is then defined as the squared difference between predicted increments:
\[
d(s,a)
\;=\;
\big\|\Delta\hat s_E(s,a) - \Delta\hat s_N(s,a)\big\|_2^2.
\]

\paragraph{One-step reward prediction and disagreement metric.}
In the variants of Grid-World experiments (Fig.~\ref{additional_exp}), we additionally equip each predictor with a 
reward head $\hat R_i(s,a)$, $i\in\{E,N\}$, implemented as an extra head on top of the 
shared predictor trunk $\hat P_i$. In this case, the transition and reward predictors are 
trained jointly on $(s,a,r,s')$ with the combined loss
\[
\mathcal L_{\text{pred}}^{(i)}
\;=\;
\mathbb{E}_{(s,a,r,s')\sim\mathcal D}
\Big[
 - \log \hat P_i(s' \mid s,a)
 \;+\;
 \big\| \hat R_i(s,a) - r \big\|_2^2
\Big],
\qquad i\in\{E,N\}.
\]
The reward component of the disagreement is defined as an $\ell_1$ distance, consistent 
with $\epsilon_R(s,a)$ in Eq.~\eqnref{eq:epsilons}, and the overall disagreement combines 
transition and reward terms:
\[
d(s,a)
\;=\;
\frac{1}{2}\sum_{s'\in\mathcal S}
\big|\hat P_N(s'\mid s,a) - \hat P_E(s'\mid s,a)\big|
\;+\;
\big|\hat R_N(s,a) - \hat R_E(s,a)\big|.
\]

\paragraph{Disagreement thresholding and label generation.}
We maintain running statistics $(\mu_t,\sigma_t)$ of $d(s,a)$ via the Welford algorithm~\citep{chan1983algorithms}, form a raw threshold
$\hat\tau_t=\mu_t+\kappa\,\sigma_t$ (reflecting the assumption that symmetry-breaking is sporadic), and then apply exponential smoothing:
\[
\tau_t \;\leftarrow\; \beta\,\tau_{t-1} + (1-\beta)\,\hat\tau_t.
\]
Binary supervision is given by $y(s,a)=\mathbbm{1}\{d(s,a)>\tau_t\}$. Here, $\kappa$ is a hyperparameter that controls what fraction of the disagreement distribution is treated as symmetry-breaking. We find that exhaustive tuning is unnecessary: a rough estimate of whether symmetry violations are sparse or dense is sufficient to choose a robust $\kappa$ (Fig.~\ref{ablation:kappa_abl}).

For more challenging dynamics (e.g., Grid-World environment with $20, 30, \text{or}~ 40$ obstacles), we found a slight performance gain from a batchwise quantile-based variant: within each minibatch $\mathcal B$, we treat $\{d(s,a): (s,a)\in\mathcal B\}$ as an empirical distribution and set a batchwise threshold
\[
\tau_{\mathcal B} \;=\; Q_\alpha\big(\{d(s,a) : (s,a)\in\mathcal B\}\big),
\]
where $Q_\alpha$ denotes the upper $\alpha$-quantile. Binary labels are then $y(s,a)=\mathbbm{1}\{d(s,a)>\tau_{\mathcal B}\}$.

\paragraph{Gating function training and stochastic gating.}
We train the gating network $\lambda_\omega:\mathcal S\times\mathcal A\!\to\![0,1]$ to estimate the likelihood of symmetry-breaking using the binary cross-entropy loss (Eq.~\eqnref{lambda_q_obj}) on minibatches from the replay buffer $\mathcal D$.
During each RL update of the value function and policy, we recompute and cache $\lambda_\omega (s,a)$ on the sampled minibatch and obtain a \emph{stochastic hard gate} by Bernoulli sampling:
\[
p(s,a)\;:=\;\lambda_\omega(s,a), \qquad
\tilde\lambda(s,a)\;\sim\;\mathrm{Bernoulli}\!\big(p(s,a)\big).
\]
We then form $Q_\theta=(1-\tilde\lambda)Q_E+\tilde\lambda Q_N$ for the critic update and use $\tilde\lambda$ for the actor-side alignment (Sec.~\ref{sec:algorithms}).
Gradients from $Q/\pi$ do \emph{not} flow into $\omega$ (stop-gradient through $\tilde\lambda$).

\paragraph{Target gate to reduce variance.}
To reduce non-stationarity and variance induced by stochastic gating, we maintain an
exponential moving average (EMA) of the gating probabilities for use in the target
$Q$-updates:
\[
\bar p \;\leftarrow\; \tau_\lambda\,\bar p + (1-\tau_\lambda)\,p,
\]
where $p = \lambda_\omega(s,a)$ denotes the current gate probability. The hard RL gate
is then sampled from the smoothed probability $\bar p$ rather than from $p$:
\[
\tilde\lambda(s,a)\;\sim\;\mathrm{Bernoulli}\!\big(\bar p\big).
\]
% (We ablate using the expectation $\bar p$ deterministically vs.\ one Monte Carlo sample; by default we use a single sample.)

\paragraph{Warm-start.}
To avoid noisy labels before the dynamics predictors stabilize, we use a warm-up period $W$ steps during which the gate loss is disabled (i.e., $y\equiv 0$ and $\omega$ is not updated). A small prior routing is used by clamping $\tilde\lambda{=}1$ with probability $p_{\text{warm}}$ during warm-up.

\subsection{Details for learning $\lambda_\zeta(s)$}
\label{app:lambda-p-detail}
We train a state-only actor gate $\lambda_\zeta:\mathcal S\to[0,1]$ to conservatively
aggregate the action-dependent critic gate via
\[
\lambda_\zeta(s)\;\approx\;\max_{a}\lambda_\omega(s,a).
\]
To do so we adopt \emph{expectile regression} with a high expectile level
$\tau\!\to\!1$, which approximates the max while remaining stable on in-distribution
actions. Concretely, for each state $s$ we draw $M$ candidate actions
$\{a_i\}_{i=1}^M$ (from current policies; see sampling details below) and
minimize
\[
\mathcal L_{\lambda}(\zeta)
=\mathbb E_{s\sim\mathcal D}\!\left[
\frac{1}{M}\sum_{i=1}^M L_\tau\!\Big(\lambda_\omega(s,a_i)-\lambda_\zeta(s;\zeta)\Big)
\right],
\qquad
L_\tau(u)\;=\;|\tau-\mathbbm 1\{u<0\}|\,u^2.
\]
This objective encourages $\lambda_\zeta(s)$ to match the upper tail of
$\{\lambda_\omega(s,a_i)\}_{i=1}^M$, leading to a conservative state-level gate.

\paragraph{Bernoulli actor gating (training \& inference).}
At both training and inference, we use a binary actor gating sampled from the
probability $\lambda_\zeta(s)$:
\[
\tilde\lambda_\zeta(s)\;\sim\;\mathrm{Bernoulli}\!\big(\lambda_\zeta(s)\big).
\]
For RL updates we cache $\tilde\lambda_\zeta$ per minibatch and apply
stop-gradient through the sample. 

\paragraph{Candidate action sampling for expectiles.}
We form the candidate set $\{a_i\}_{i=1}^M$ per state by drawing from a mixture of current policies (e.g., $\pi_E,\pi_N$). 
This increases the chance of including symmetry-breaking actions. 
We use the same $M$ across tasks (see the hyperparameters Table~\ref{tab:hyperparams-sac}).

\paragraph{Warm-start.}
To avoid noisy supervision before $\lambda_\omega$ stabilizes, we apply a short warm-up
period $W$ steps where $\mathcal L_{\lambda}(\zeta)$ is disabled; We use a small prior
bias by clamping $\tilde\lambda_\zeta{=}1$ with probability $p_{\text{warm}}$ during warm-up.

\paragraph{Gradient isolation.}
Gradients from the RL losses do \emph{not} flow into $\lambda_\zeta$; the gate is updated
only via the expectile objective above.

\subsection{Networks}
For the Grid-World experiments, we implemented all equivariant networks from MDP-Homomorphic Networks \citep{van2020mdp}.
% \texttt{escnn}\citep{e2cnn}. 
In continuous control tasks, we used EMLP layers \citep{finzi2021practical}. The remaining networks, including $\pi_N$, $\hat P_N$, $\lambda_\omega$, $\lambda_\zeta$, and the critics, were implemented as standard MLPs.

\subsection{Implementation framework}
Our implementation builds on the Residual Pathway Prior (RPP) codebase~\citep{finzi2021residual}, which provides flexible infrastructure for combining equivariant and non-equivariant components. We extend this framework with our gated $Q$-networks, gated policies, and disagreement-based $\lambda$ supervision, while keeping the training loops and optimization settings consistent with RPP.

\section{Ablation studies} 
We present additional experiments on alternative actor-gating schemes, hard vs.\ soft gating, and shared trunks for $Q$, policy, and predictor networks for parameter efficiency. All curves are averaged over five random seeds, and shaded regions indicate standard error.

\begin{figure}[h!]
\centering
\includegraphics[width=1.0\linewidth]{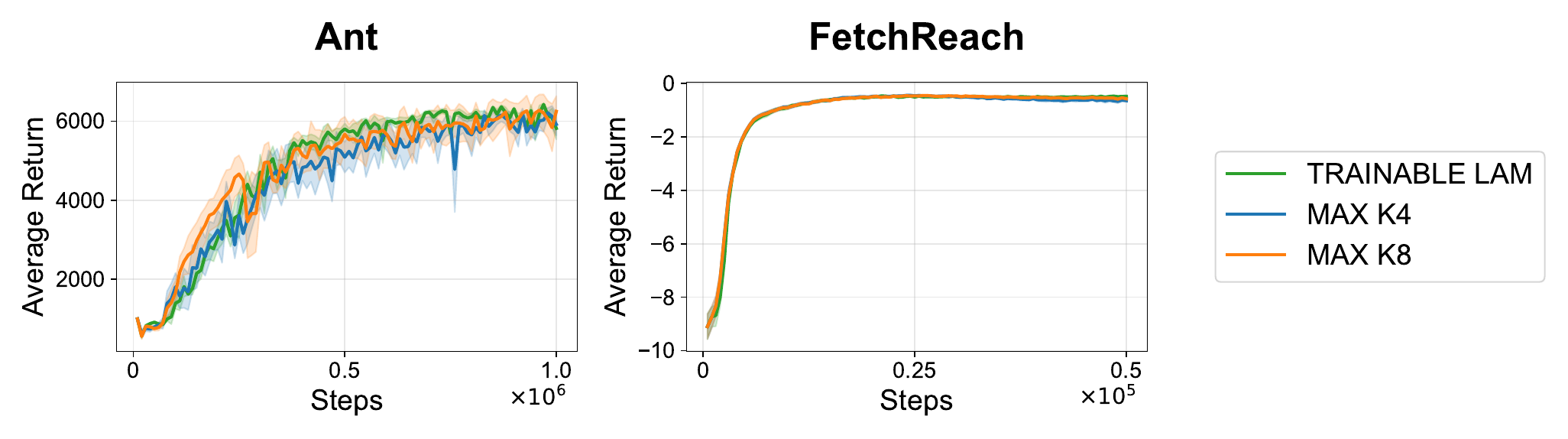}
\caption{\textbf{Ablation on actor-gating schemes.} As described in Sec.~\ref{sec:algorithms}, our main method uses a trainable state gate $\lambda_\zeta(s)$ to approximate the conservative aggregation $\max_a \lambda_\omega(s,a)$, so that symmetry-breaking can be detected even when it occurs only for a small subset of actions. Here we compare this gate to a simpler sampled-max variant that estimates $\max_a \lambda_\omega(s,a)$ by taking the maximum over $K$ actions, obtained by sampling $K/2$ actions from each of $\pi_E$ and $\pi_N$. In the legend, \textbf{TRAINABLE LAM} denotes the $\lambda_\zeta(s)$, and \textbf{MAX K4} and \textbf{MAX K8} denote these sampled-max schemes with $K\in\{4,8\}$. All three options achieve similar performance and sample efficiency, indicating that the sampled-max gate is a reasonable alternative when architectural simplicity is prioritized.
}
\label{ablation:single_lam}
\end{figure}
\begin{figure}[h!]
\centering
\includegraphics[width=1.0\linewidth]{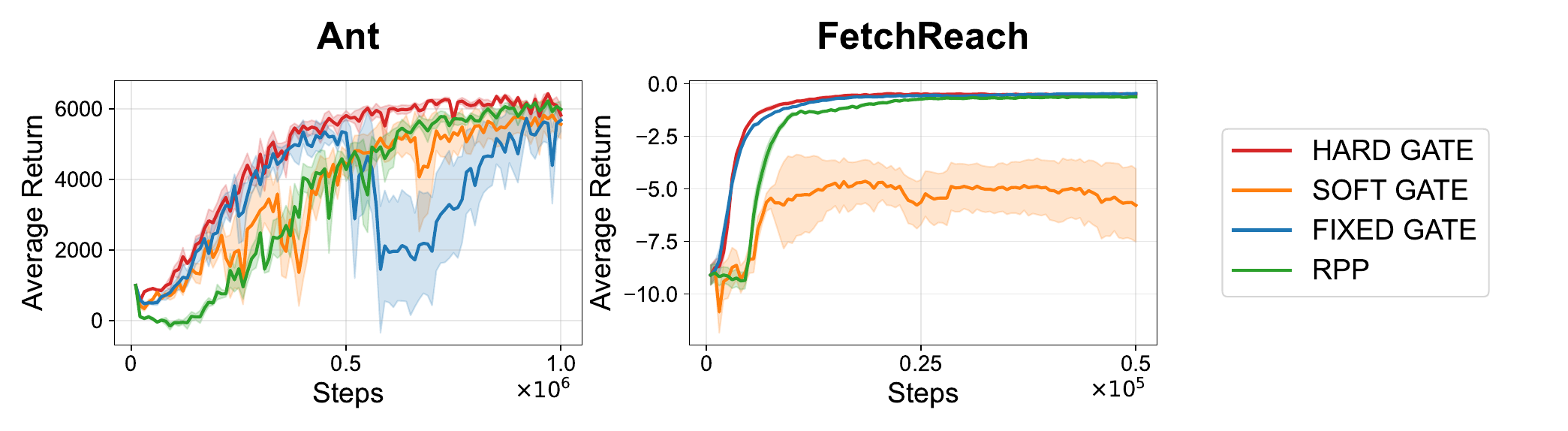}
\caption{\textbf{Hard vs.\ soft gating in practice.}
The theory permits any measurable gate $\lambda(s,a)\in[0,1]$, as long as symmetry-breaking regions are routed to the true MDP so that the bound in Corollary~\ref{cor:proximity} remains tight. In our implementation, however, $\lambda_\omega$ is trained separately from the critics, so using it as a soft mixing weight inside the Bellman backup can affect stability. We compare three variants: \textbf{HARD GATE}, our default, which samples a binary gate and routes entirely to $Q_E$ or $Q_N$; \textbf{SOFT GATE}, which uses $\lambda_\omega(s,a)\in[0,1]$ directly as a convex-combination weight; \textbf{FIXED GATE}, which uses a constant mixture $\lambda(s,a)=0.5$ (analogous to a fixed blend of equivariant and non-equivariant $Q$-heads); and \textbf{RPP} \citep{finzi2021residual}, a baseline that combines equivariant and non-equivariant features within the linear layers. On Ant, FIXED GATE exhibits noisy learning but eventually reaches reasonably good performance, while SOFT GATE is more stable but remains below HARD GATE (and RPP). On Fetch Reach, FIXED GATE performs comparably to HARD GATE and outperforms RPP, whereas SOFT GATE fails to learn, supporting our choice of hard gating in the practical algorithm.}
\label{ablation:soft_lam}
\end{figure}

\begin{figure}[h!]
\centering
\includegraphics[width=1.0\linewidth]{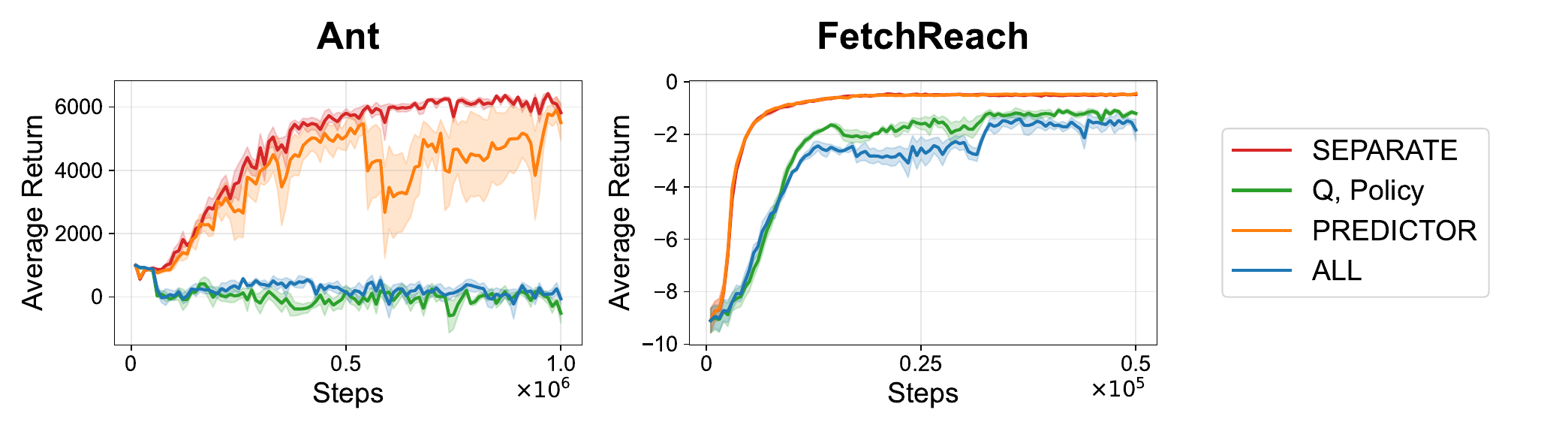}
\caption{\textbf{Shared vs.\ separate trunks for $Q$, policy, and predictors.}
We compare four architectures. \textbf{SEPARATE} is our default, with distinct
networks for the critics, policy, and one-step predictors. \textbf{Q, POLICY}
shares an equivariant trunk between the critics and the policy, followed by
equivariant and non-equivariant heads for $(Q_E,Q_N)$ and $(\pi_E,\pi_N)$.
\textbf{PREDICTOR} shares an equivariant trunk between the predictors
$\hat P_E$ and $\hat P_N$. \textbf{ALL} combines both sharing schemes, using
shared trunks for $Q$/policy and for the predictors simultaneously. On Ant and
Fetch Reach, sharing only between predictors is mostly benign (with mild
instability on Ant), whereas sharing the $Q$/policy trunk or combining both
sharing schemes (\textbf{Q, POLICY} and \textbf{ALL}) significantly harms
performance or leads to failed training. These results support our choice of
fully separate networks in the main experiments.}
\label{ablation:shared_trunk}
\end{figure}

\begin{figure}[h!]
\centering
\includegraphics[width=1.0\linewidth]{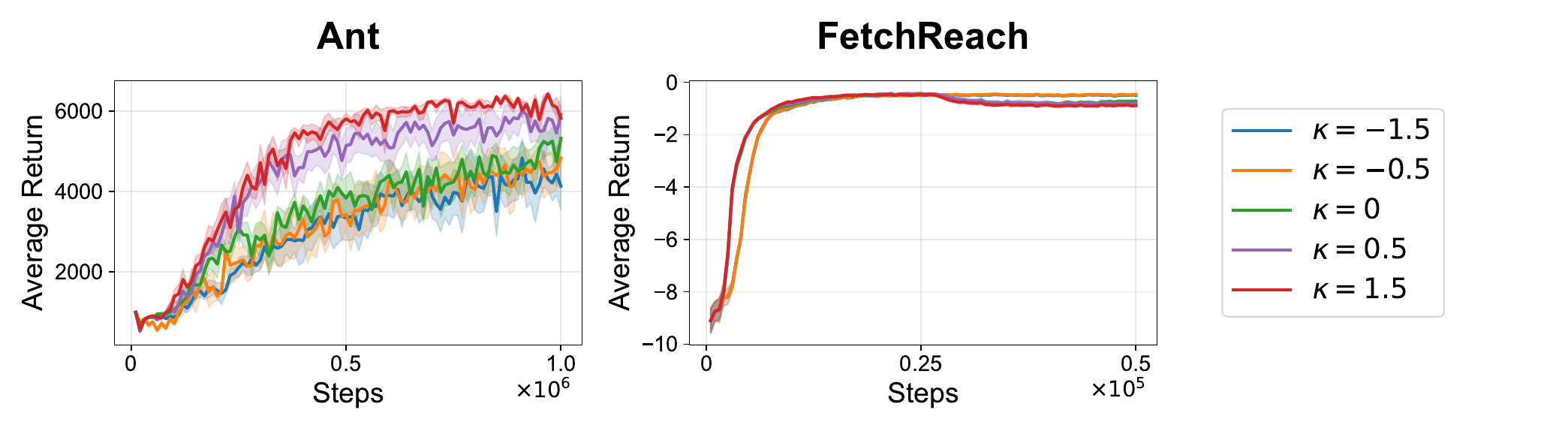}
\caption{\textbf{Ablation study on $\kappa$ sensitivity.}
We evaluated the sensitivity of our method to the choice of $\kappa$ in two environments. In our setup, $\lambda_\omega$ is trained using state–action pairs identified as outliers in the online disagreement distribution $d(s,a)$, with $\kappa$ controlling the $z$-score threshold for labeling symmetry-breaking samples. In \emph{Ant}, performance remains stable and even improves as $\kappa$ increases ($\kappa>0$), suggesting that symmetry breaking is relatively rare and only a small upper tail of $d(s,a)$ needs to be treated as breaking. In \emph{Fetch Reach}, lower thresholds ($\kappa<0$), which label a larger fraction of pairs as symmetry-breaking, yield better and more stable performance, while higher thresholds lead to late-stage degradation. Overall, the method is not overly sensitive to the exact value of $\kappa$; a coarse estimate of whether symmetry violations are sparse or common is sufficient to choose a robust threshold.
}
\label{ablation:kappa_abl}
\end{figure}

\clearpage
\section{Experimental Details}\label{experiment_detail}
In this section, we summarize baseline models, the group symmetries, environment details, and the hyperparameters used in each environment. The implemented group symmetries used in each environment are summarized in Table~\ref{tab:symmetries}, and the corresponding group representations for each state and action space are summarized in Table~\ref{tab:Hopper},~\ref{tab:Ant},~\ref{tab:Swimmer},~\ref{tab:Fetch},~\ref{tab:UR5e}. We summarize the common hyperparameters for DQN used in Grid-World, including those for PE-DQN in Table~\ref{tab:hyperparams-dqn}. For SAC, we use the default hyperparameters \citep{haarnoja2018soft}, which are listed in Table~\ref{tab:hyperparams-sac}, including those for PE-SAC. All the experiments were run on NVIDIA RTX 4090 GPUs.

\begin{table}[h!]
\caption{Symmetries of environments used in the experiments.}
\label{tab:symmetries}
\begin{center}
\begin{tabular}{ll}
\toprule
Env & \multicolumn{1}{c}{Implemented Symmetries} \\ 
\midrule
Grid-World & \multicolumn{1}{c}{$C_4$} \\
Hopper   & \multicolumn{1}{c}{$\mathbb{Z}_2$} \\
Ant      & \multicolumn{1}{c}{$\mathbb{Z}_4$} \\
Swimmer  & \multicolumn{1}{c}{$\mathbb{Z}_2$} \\
Fetch    & \multicolumn{1}{c}{$SO(3)$} \\
UR5e     & \multicolumn{1}{c}{$SO(3)$} \\
\bottomrule
\end{tabular}
\end{center}
\end{table}

\paragraph{Baselines.}
We evaluate all baselines under a unified experimental setup. For the Grid-World environment, the $Q$-network is built using the MDP-Homomorphic architecture \citep{van2020mdp}. For continuous-control tasks, every method employs the same EMLP backbone \citep{finzi2021practical} to ensure architectural consistency.

\textbf{(1) Vanilla RL.} A standard non-equivariant MLP.

\textbf{(2) Exact-Equivariant.} An exactly equivariant EMLP implemented following the RPP codebase \citep{finzi2021residual}, but \emph{without} the GAN-style Adam $\beta$ hyperparameters introduced in \citet{miyato2018spectral} (we keep the default Adam betas).

\textbf{(3) RPP \citep{finzi2021residual}.} We use the official RPP implementation with both value and policy RPP-based, as in the authors’ main script. We do not use the non-equivariant value networks introduced in their ablations. We enable the GAN-style Adam $\beta$ settings for stability, mirroring their recommended configuration.

\textbf{(4) Approximately Equivariant RL \citep{park2024approximate}.} We incorporate the relaxation mechanism of \citet{park2024approximate} into either an MDP-Homomorphic network \citep{van2020mdp} or an EMLP backbone \citep{finzi2021practical}, rather than using the original escnn stack \citep{e2cnn}. This ensures that all baselines share an identical architecture and training pipeline.
In practice, the DQN configuration closely follows the original design, while the SAC version applies learned per-channel scalars—initialized at 1—to the outputs of each equivariant linear layer. This constitutes a mechanism-level adaptation rather than a full reimplementation of their approach.

\paragraph{Grid-World.}
The symmetry used in Grid-World is the Cyclic group $C_4$, as summarized in Table~\ref{tab:symmetries}. 
It consists of a $15\times 15$ grid, with observations given by the concatenated agent and goal positions $[x_{\text{agent}}, y_{\text{agent}}, x_{\text{goal}}, y_{\text{goal}}]$.
The action space is $\{\uparrow, \leftarrow, \downarrow, \rightarrow\}$. 
Group representations are implemented as two concatenated $2$D rotation matrices on the state space and a $4\times 4$ permutation matrix on the action space. 
Rewards are defined as $+1$ for reaching the goal and $-0.01$ per step otherwise.

In additional Grid-World experiments, we consider two variants of this base environment. For the reward-level symmetry-breaking setting, we convert half of the (randomly selected) obstacles into passable cells that incur a reward penalty of $-0.5$ instead of blocking the agent. For the complex-dynamics variant, inspired by Gym's FrozenLake environment~\citep{brockman2016openai}, we introduce stochasticity in the action execution: when an action is issued, it is applied to the intended direction with probability $0.65$, and with probability $0.35$, the agent is moved to a different adjacent cell.

\paragraph{Locomotion.}
The symmetries used in each environment are summarized in Table~\ref{tab:symmetries}. The state and action space representations, as well as the hyperparameters, are adopted from RPP \citep{finzi2021residual}. For Swimmer-v2, \cite{finzi2021residual} reports using the approximate symmetry $\mathbb{Z}_{2}\times\mathbb{Z}_{2}$ (left-right, front-back symmetries), but the official code does not provide a correct implementation of this. Therefore, we instead use the exact symmetry $\mathbb{Z}_{2}$ (left-right symmetries) in our Swimmer experiments.

\paragraph{Manipulation.}
The symmetries used in each environment are summarized in Table~\ref{tab:symmetries}. In Fetch Reach, the agent is trained to move the end-effector to a randomly sampled target position in each episode. The corresponding state and action spaces, together with their representations of the exploited symmetries, are provided in Table~\ref{tab:Fetch}. A dense reward is given at every timestep as the negative Euclidean distance between the current end-effector and the goal position.
In UR5e Reach, the agent is trained to reach a randomly sampled $SE(3)$ target pose in each episode. The corresponding state and action spaces with the representations of the exploited symmetries are provided in Table~\ref{tab:UR5e}. A dense reward is given at every timestep as the negative weighted sum of the Euclidean distance (translational error) and the geodesic distance (rotational error) between the current end-effector and the goal poses. A weight of 0.19098621461 is applied to the geodesic distance term so that a $15^\circ$
rotational error is treated as equivalent to a 0.05m translational error.
We scale action of translation by 0.05 m for both tasks, and rotation by 0.2618 rad (15°) for UR5e Reach task.\\

\paragraph{Overall.}
The state and action representations used for the equivariant networks in each environment except Grid-World are shown in Table~\ref{tab:Hopper},~\ref{tab:Ant},~\ref{tab:Swimmer},~\ref{tab:Fetch},~\ref{tab:UR5e} (last column). In these tables, $V$ denotes an $n$-dimensional base representation, transformed by permutations for $\mathbb{Z}_{n}$ and by rotation matrices for $SO(3)$. $\mathbb{R}$ denotes a 1-dimensional scalar representation which is invariant under these group actions. $P$ denotes a 1-dimensional pseudoscalar representation, which is transformed by the sign of the permutation. (e.g., for Swimmer-v2, $P$ flips sign under left-right reflection of the body.) Note that powered representations such as $V^n$ indicate the direct sum of $n$ instances of the representation; this is given here as an example:
\[V^n = \bigoplus_{i=1}^{n} V.\]
Hyperparameters used for locomotion and manipulator (SAC) experiments are shown in Table~\ref{tab:hyperparams-sac}. Those are shared across all tasks, unless specified in the table.

\begin{table}[h!]
\caption{Hopper-v2 state and action spaces with their representations}
\label{tab:Hopper}
\begin{center}
\begin{tabular}{cllll}
\toprule
 & Name & Description & Dim & Rep \\
\midrule
\multirow{7}{*}{State} 
 & Torso z        & z-coordinate of the torso & 1 & $\mathbb{R}$ \\
 & Orientation    & Torso pitch angle         & 1 & $P$ \\
 & Thigh angle      & Thigh joint angle & 1 & $P$ \\
 & Leg angle     & Leg joint angle & 1 & $P$ \\
 & Foot angle    & Foot joint angle & 1 & $P$ \\
 & Torso velx      & Linear velocity of torso (x) & 1 & $P$ \\
 & Torso velz      & Linear velocity of torso (z) & 1 & $\mathbb{R}$ \\
 & Torso angvel   & Angular velocity of torso (y) & 1 & $P$ \\
 & Thigh angvel   & Angular velocity of thigh hinge & 1 & $P$ \\
 & Leg angvel   & Angular velocity of leg hinge & 1 & $P$ \\
 & Foot angvel   & Angular velocity of foot hinge & 1 & $P$ \\
\midrule
\multirow{3}{*}{Action}
 & Thigh   & Torque applied on thigh joint  & 1 & $P$ \\
 & Leg  & Torque applied on leg joint & 1 & $P$ \\
 & Foot & Torque applied on foot joint & 1 & $P$ \\
\bottomrule
\end{tabular}
\end{center}
\end{table}

\begin{table}[h!]
\caption{Ant-v2 state and action spaces with their representations}
\label{tab:Ant}
\begin{center}
\begin{tabular}{cllll}
\toprule
 & Name & Description & Dim & Rep \\
\midrule
\multirow{21}{*}{State} 
 & Torso z        & z-coordinate of the torso                     & 1 & $\mathbb{R}$ \\
 & Torso quat     & Orientation of the torso (quaternion)          & 4 & $\mathbb{R}^{4}$ \\
\cmidrule(l){2-5}
 & Hip 1 angle    & Angle between torso and front-left link        & 1 & \multirow{4}{*}{$V$} \\
 & Hip 2 angle    & Angle between torso and front-right link       & 1 & \\
 & Hip 3 angle    & Angle between torso and back-left link         & 1 & \\
 & Hip 4 angle    & Angle between torso and back-right link        & 1 & \\
\cmidrule(l){2-5}
 & Ankle 1 angle  & Angle between two front-left links             & 1 & \multirow{4}{*}{$V$} \\
 & Ankle 2 angle  & Angle between two front-right links            & 1 & \\
 & Ankle 3 angle  & Angle between two back-left links              & 1 & \\
 & Ankle 4 angle  & Angle between two back-right links             & 1 & \\
\cmidrule(l){2-5}
 & Torso vel      & Linear velocity of torso (x, y, z)               & 3 & $\mathbb{R}^{3}$ \\
 & Torso angvel   & Angular velocity of torso (x, y, z)              & 3 & $\mathbb{R}^{3}$ \\
\cmidrule(l){2-5}
 & Hip 1 angvel   & Angular velocity of front-left hip joint     & 1 & \multirow{4}{*}{$V$} \\
 & Hip 2 angvel   & Angular velocity of front-right hip joint    & 1 & \\
 & Hip 3 angvel   & Angular velocity of back-left hip joint      & 1 & \\
 & Hip 4 angvel   & Angular velocity of back-right hip joint     & 1 & \\
\cmidrule(l){2-5}
 & Ankle 1 angvel & Angular velocity of front-left ankle joint           & 1 & \multirow{4}{*}{$V$} \\
 & Ankle 2 angvel & Angular velocity of front-right ankle joint          & 1 & \\
 & Ankle 3 angvel & Angular velocity of back-left ankle joint            & 1 & \\
 & Ankle 4 angvel & Angular velocity of back-right ankle joint           & 1 & \\
\midrule
\multirow{8}{*}{Action}
 & Hip 1   & Torque on front-left hip joint  & 1 & \multirow{4}{*}{$V$} \\
 & Hip 2   & Torque on front-right hip joint & 1 & \\
 & Hip 3   & Torque on back-left hip joint   & 1 & \\
 & Hip 4   & Torque on back-right hip joint  & 1 & \\
\cmidrule(l){2-5}
 & Ankle 1 & Torque on front-left ankle joint      & 1 & \multirow{4}{*}{$V$} \\
 & Ankle 2 & Torque on front-right ankle joint     & 1 & \\
 & Ankle 3 & Torque on back-left ankle joint       & 1 & \\
 & Ankle 4 & Torque on back-right ankle joint      & 1 & \\
\bottomrule
\\
\end{tabular}
\end{center}
\end{table}

\begin{table}[h!]
\caption{Swimmer-v2 state and action spaces with their representations}
\label{tab:Swimmer}
\begin{center}
\begin{tabular}{cllll}
\toprule
 & Name & Description & Dim & Rep \\
\midrule
\multirow{7}{*}{State} 
 & Orientation angle    & Front tip angle & 1 & $P$ \\
 & Head joint angle     & First rotor angle & 1 & $P$ \\
 & Tail joint angle     & Second rotor angle & 1 & $P$ \\
\cmidrule(l){2-5}
 & x, y velocities      & Tip velocities along x, y & 2 & $\mathbb{R}^{2}$ \\
\cmidrule(l){2-5}
 & Orientation angvel   & Front tip angular velocity & 1 & $P$ \\
 & Head joint angvel    & First rotor angular velocity & 1 & $P$ \\
 & Tail joint angvel    & Second rotor angular velocity & 1 & $P$ \\
\midrule
\multirow{2}{*}{Action} 
 & Head joint  & Torque on first rotor  & 1 & $P$ \\
 & Tail joint  & Torque on second rotor & 1 & $P$ \\
\bottomrule
\end{tabular}
\end{center}
\end{table}

\begin{table}[h!]
\caption{Fetch Reach state and action spaces with their representations}
\label{tab:Fetch}
\begin{center}
\begin{tabular}{cllll}
\toprule
 & Name & Description & Dim & Rep \\
\midrule
\multirow{3}{*}{State} 
 & EE pos   & End-effector position $(x,y,z)$ & 3 & $V$ \\
 & EE vel   & End-effector velocity $(v_x,v_y,v_z)$ & 3 & $V$ \\
 & Goal pos & Goal position $(x,y,z)$ & 3 & $V$ \\
\midrule
\multirow{2}{*}{Action} 
 & EE rel trans     & Relative translation $(\Delta x,\Delta y,\Delta z)$ & 3 & $V$ \\
 & Gripper cmd  & Gripper open/close control & 1 & $\mathbb{R}$ \\
\bottomrule
\end{tabular}
\end{center}
\end{table}

\begin{table}[h!]
\caption{UR5e Reach state and action spaces with their representations}
\label{tab:UR5e}
\begin{center}
\begin{tabular}{cllll}
\toprule
 & Name & Description & Dim & Rep \\
\midrule
\multirow{6}{*}{State} 
 & EE pos        & End-effector position $(x,y,z)$ & 3 & $V$ \\
 & EE rot6d      & End-effector orientation (6D rep.) & 6 & $V^2$ \\
 & EE velp       & End-effector linear velocity $(v_x,v_y,v_z)$ & 3 & $V$ \\
 & EE velr       & End-effector angular velocity $(\omega_x,\omega_y,\omega_z)$ & 3 & $V$ \\
 & Goal pos      & Goal position $(x,y,z)$ & 3 & $V$ \\
 & Goal rot6d    & Goal orientation (6D rep.) & 6 & $V^2$ \\
\midrule
\multirow{2}{*}{Action} 
 & EE rel trans  & Relative translation $(\Delta x,\Delta y,\Delta z)$ & 3 & $\mathbb{R}^3$ \\
 & EE rel rot    & Relative rotation (axis--angle) $(a_x,a_y,a_z)$ & 3 & $\mathbb{R}^3$ \\
\bottomrule
\end{tabular}
\end{center}
\end{table}

\begin{table}[h!]
\caption{Hyperparameters used in Grid-World (DQN) experiments.}
\label{tab:hyperparams-dqn}
\begin{center}
\begin{tabular}{lc}
\toprule
Hyperparameter & Value \\
\midrule
Optimizer & Adam~\citep{kingma2015method} \\
Learning rate & $3\times 10^{-4}$ \\
Hidden size & $[256, 256]$ \\
Batch size & $256$ \\
Discount factor $\gamma$ & 0.99 \\
Target network update rate $\tau$ & 0.005 \\   % <-- soft update clarified
Replay buffer size & $1\times 10^5$ \\
$\varepsilon$-greedy schedule & $1.0 \to 0.05$ ($5\times10^4$ steps) \\
\midrule
$\lambda$, $\hat P_E$, $\hat P_N$ batch size & 256 \\
$\hat P_E$, $\hat P_N$ learning rate & $3\times 10^{-4}$ \\
$\lambda$ learning rate & $1\times 10^{-4}$ \\
\#$\lambda$ warm-start steps & $2\times 10^4$ ($\sim$30 obstacles), $4\times 10^4$ (40 obstacles) \\
$\lambda$ prior bias & 0.5 \\
$\lambda$ hidden size & $[256, 256]$ \\
$\lambda$ gradient clipping & 1.0 \\
$\hat P_E$, $\hat P_N$ hidden size & $[256, 256]$ \\
$\hat P_E$, $\hat P_N$ gradient clipping & 1.0 \\
\# $\hat P_E$, $\hat P_N$ gradient steps per update & 20 \\
Disagreement coefficient $\kappa$ (0~/~10~/~20 obstacles) & 1.5 \\
Quantile coefficient (30~/~40 obstacles) & 0.6~/~0.3 \\
\# Threshold update interval steps & 200 \\
Threshold EMA $\beta$ & 0.2 \\
\bottomrule
\end{tabular}
\end{center}
\end{table}

\begin{table}[h!]
\caption{Hyperparameters used in locomotion and manipulation (SAC) experiments.}
\label{tab:hyperparams-sac}
\begin{center}
\begin{tabular}{lc}
\toprule
Hyperparameter & Value \\
\midrule
Optimizer & Adam~\citep{kingma2015method} \\
Actor learning rate & $3\times 10^{-4}$ \\
Critic learning rate & $3\times 10^{-4}$ \\
Temperature learning rate & $3\times 10^{-4}$ \\
Entropy coefficient & auto-adjust \citep{haarnoja2018soft} \\
Batch size & 256 \\
Discount factor $\gamma$ & 0.99 \\
Target network update rate $\tau$ & 0.005 \,(0.004 for RPP Swimmer-v2) \\
Target entropy & $-0.5 \times \mathrm{dim(action)}$ \\
Hidden size & $[256, 256]$ \\
Gradient clipping & 0.5 \\
\midrule
$\lambda_\omega, \lambda_\zeta$ hidden size & $[128, 128]$ \\
$\hat P_E, \hat P_N$ hidden size & $[256, 256]$ \\
$\lambda_\omega, \lambda_\zeta, \hat P_E, \hat P_N$ batch size & 256 \\
\# $\hat P_E, \hat P_N$ gradient steps & 2 \\
$\lambda_\omega, \lambda_\zeta$ learning rate & $1\times 10^{-4}$ \\
$\lambda_\omega, \lambda_\zeta$ gradient clipping & 0.5 \\
$\lambda_\omega, \lambda_\zeta$ prior bias & 0.7685 \\
\# Threshold update interval steps & 100 \\
Threshold EMA $\beta$ & 0.1 \\
Expectile regression coefficient $\tau_{\text{exp}}$ & 0.8 \\
\# Expectile action samples $M$ & 4 \\
\bottomrule
\end{tabular}
\end{center}
\end{table}

\clearpage
\section{Equivariance Error and its Propagation Under Symmetry-Breaking}
\label{problem_fig}
\begin{figure}[h!]
\centering
\includegraphics[width=1.0\linewidth]{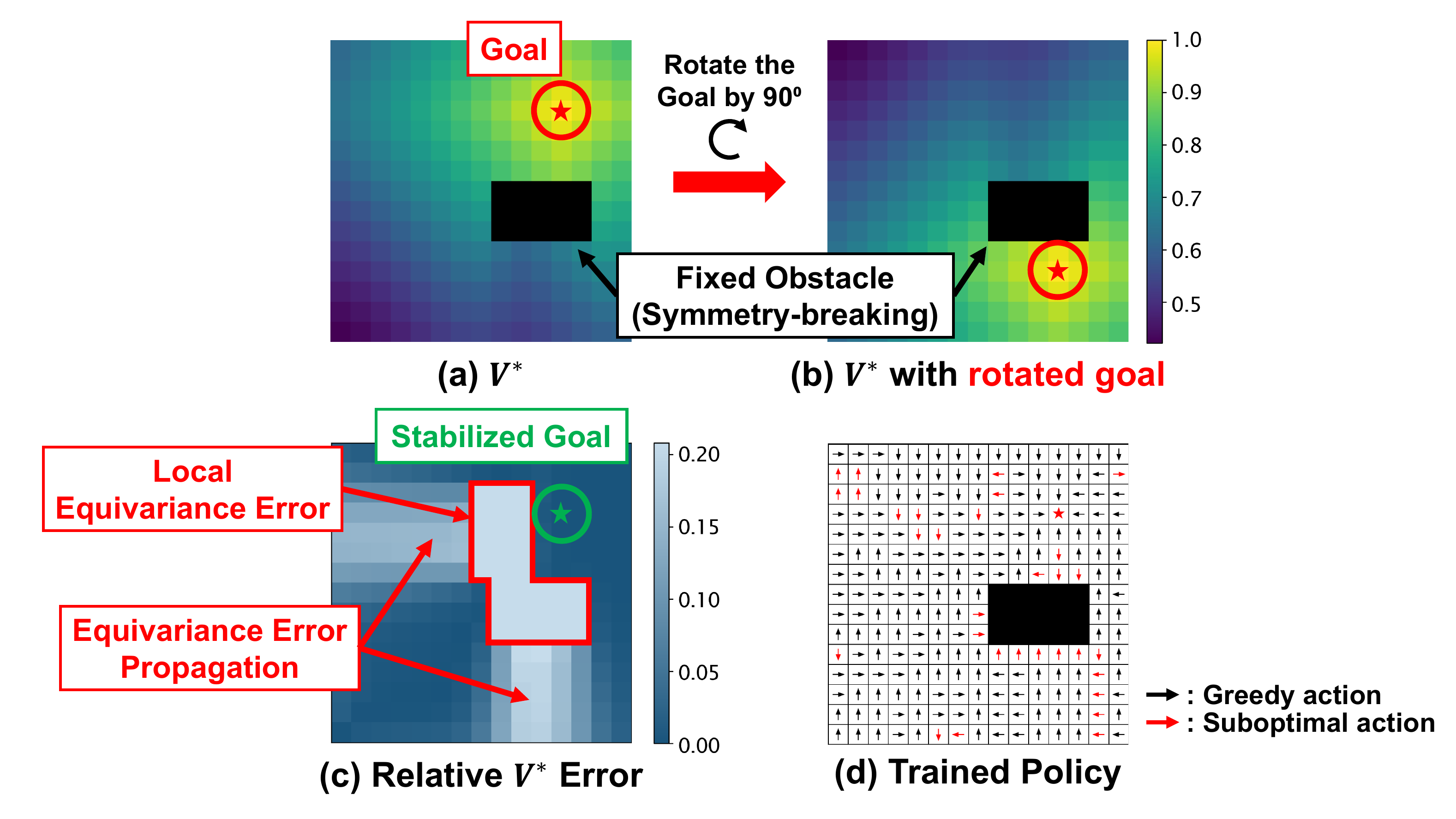}
\caption{\textbf{Equivariance error under symmetry-breaking.} We assess rotational equivariance by comparing the base optimal value function $V^*$ with the value obtained after rotating the goal by $90^\circ$ (red star) while keeping obstacles fixed (black cells), thereby breaking the symmetry.
To intensify the effect of symmetry-breaking, we additionally incorporated stochasticity into the transition kernel in this figure.
\textbf{(a)} Baseline optimal value $V^*$.
\textbf{(b)} $V^*$ with the goal rotated by $90^\circ$ while obstacles (black) remain fixed.
\textbf{(c)} Per-state relative equivariance error $({\big |V^*(s) - V^*(gs) \big|}/\big |V^*(s)\big |)$ with the goal stabilization. The sky-blue cells bordered by a red line coincide with the overlap between the original obstacle and its image under $g$, creating large local errors. The error then propagates outward, as reflected by the surrounding regions whose shading gradually darkens. This non-local propagation occurs for all $g\in G$ and has broader implications for equivariant RL training.
\textbf{(d)} Greedy actions from an equivariant DQN. Red arrows denote suboptimal moves, illustrating that the learned policy inherits errors in symmetry-broken regions.}
\end{figure}

\section{The Use of Large Language Models}
In this paper, we used LLMs solely for text polishing and generating code snippets. Study design, theoretical results, algorithmic contributions, and all experiments/analyses were conceived and implemented by the authors. All code generated with LLM assistance was reviewed and verified by the authors.

\end{document}